\def\delequal{\mathrel{\ensurestackMath{\stackon[1pt]{=}{\scriptstyle\Delta}}}}
\begin{document}

\title{Prioritized Experience-based Reinforcement Learning with Human Guidance for Autonomous Driving}

\author{Jingda Wu, \IEEEmembership{Student Member,~IEEE}
Zhiyu Huang, \IEEEmembership{Student Member,~IEEE}
Wenhui Huang, and
Chen Lv, \IEEEmembership{Senior Member,~IEEE}
\thanks{J. Wu, Z. Huang, W. Huang and C. Lv are with the School of Mechanical and Aerospace Engineering, Nanyang Technological University, Singapore, 639798. (E-mail: \{jingda001, zhiyu001, wenhui001\}@e.ntu.edu.sg,  lyuchen@ntu.edu.sg)}
\thanks{Corresponding author: Chen Lv}
\thanks{This paper has been published in \textit{IEEE Transactions on Neural Networks and Learning Systems}. DOI: 10.1109/TNNLS.2022.3177685}
\thanks{The code associated with this paper is available at this \href{https://github.com/wujingda/Prioritized-Human-in-the-loop-End-to-end-Autonomous-Driving}{link} }
}

\markboth{Journal of \LaTeX\ Class Files,~Vol.~14, No.~8, August~2021}%
{Shell \MakeLowercase{\textit{et al.}}: A Sample Article Using IEEEtran.cls for IEEE Journals}


\maketitle

\begin{abstract}
Reinforcement learning (RL) requires skillful definition and remarkable computational efforts to solve optimization and control problems, which could impair its prospect. Introducing human guidance into reinforcement learning is a promising way to improve learning performance. In this paper, a comprehensive human guidance-based reinforcement learning framework is established. A novel prioritized experience replay mechanism that adapts to human guidance in the reinforcement learning process is proposed to boost the efficiency and performance of the reinforcement learning algorithm. To relieve the heavy workload on human participants, a behavior model is established based on an incremental online learning method to mimic human actions. We design two challenging autonomous driving tasks for evaluating the proposed algorithm. Experiments are conducted to access the training and testing performance and learning mechanism of the proposed algorithm. Comparative results against the state-of-the-art methods suggest the advantages of our algorithm in terms of learning efficiency, performance, and robustness.
\end{abstract}

\begin{IEEEkeywords}
Reinforcement learning, priority experience replay, human demonstration, autonomous driving.
\end{IEEEkeywords}

\section{Introduction}
\IEEEPARstart{R}{einforcement} learning (RL) has substantially contributed to numerous fields \cite{RN1,RN2,RN3,RN4} by solving control and optimization problems. As a branch of machine learning methods, RL improves the capability of controlling agents in black-box environments through the exploratory trial-and-error principle \cite{RN5}. Recent popular RL algorithms, e.g., rainbow deep Q-learning \cite{RN6}, proximal policy optimization (PPO) \cite{RN7}, and soft actor-critic (SAC) \cite{RN8}, have shown ability in handling high-dimensional environment representation and generalization, due to the introduction of deep neural networks. Albeit RL can achieve good performance in complex tasks, its drawback emerges that their interactions with the environment are very inefficient \cite{RN9}. Thus, using RL to solve a problem needs skillful definitions and settings and consumes remarkable computational resources \cite{RN10}. 

Combining human guidance with RL can be a promising way to mitigate the above drawback \cite{RN11}. First, human intervention has been used to improve RL performance. Intervention is triggered by unfavorable actions and should be avoided by RL. Then, the human demonstration is a powerful tool to enhance RL’s ability \cite{RN12}. In this context, the objective functions are generally reshaped compatible with supervised learning to improve efficiency \cite{RN13}. 
Despite the above human guidance-based methods, RL needs to process numerous data from its self-explorations. The existing methods do not particularly optimize the utilization of human guidance data; consequently, they still need great human workloads to avoid submersion of guidance in exploratory data. Additionally, human guidance, which is variant to proficiency, mental and physical status of participants, should not be equally treated since some low-quality guidance can even impair the RL performance.

We propose a priority-based experience replay method on human guidance and put forward the associated human guidance-based RL algorithm to bridge the abovementioned gap. Our approach is off-policy, which leverages the experience replay mechanism \cite{RN14} to maximize the utilization efficiency of self-exploratory data. The proposed priority replay mechanism can further improve the utilization efficiency of human guidance data by quantifying their values and weighing their utilized probability, which ultimately augments the RL performance. As a result, the efficiency can be improved by over seven times under the adopted task. The schematic diagram of our algorithm is depicted in Fig. \ref{Figure1}. To evaluate the training and testing performance of our proposed method, we design two challenging autonomous driving scenarios. The experimental results suggest the advance of the proposed algorithm compared to state-of-the-art baselines in learning efficiency, practical performance, and robustness.

\begin{figure}[tb]
\begin{center}
\noindent
  \includegraphics[width=\linewidth]{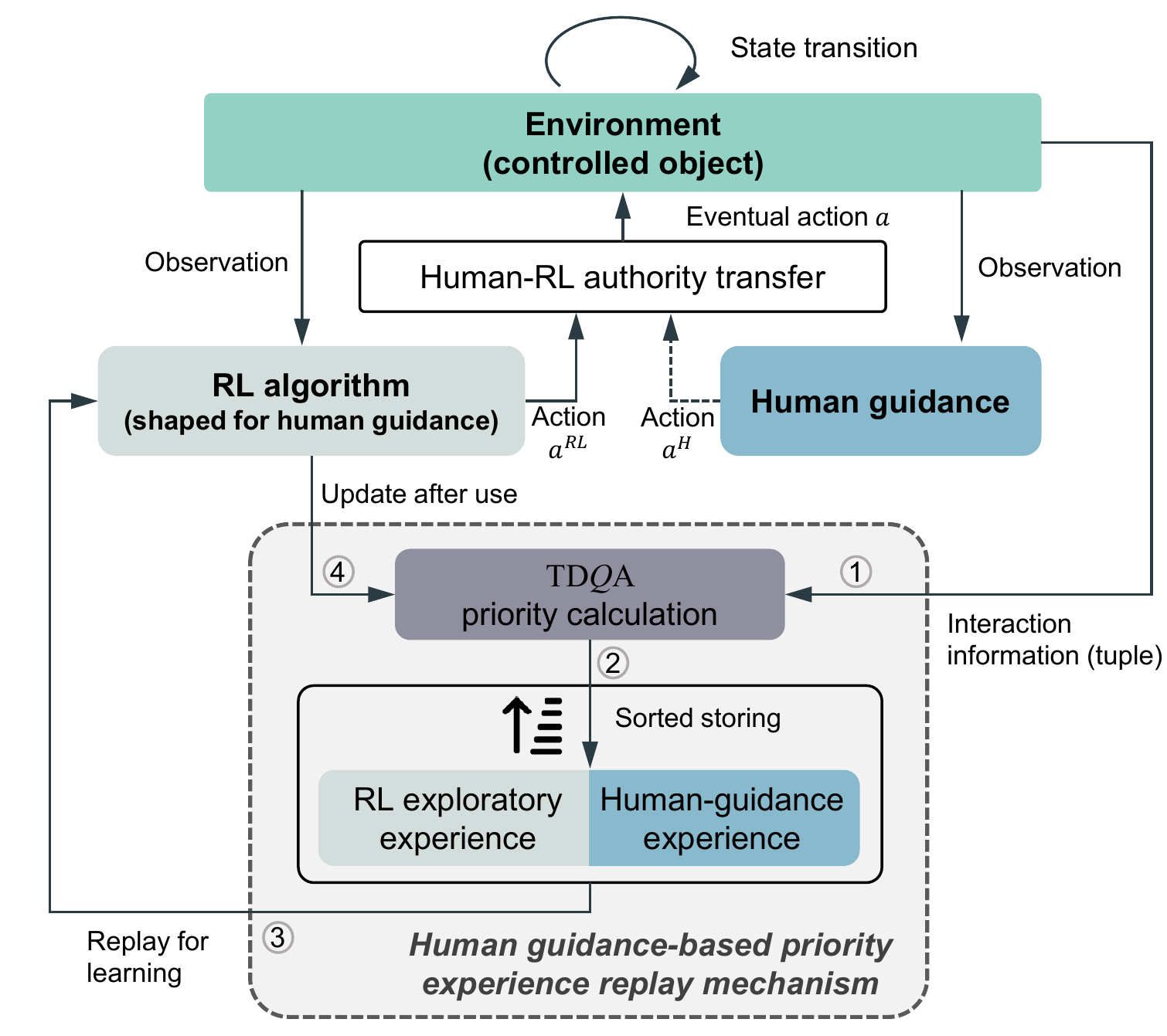}
  \end{center}
    \caption[Framework of the proposed human-guided RL.]{Framework of the proposed human-guided reinforcement learning. The RL algorithm in this report is shaped in multiple aspects to adapt to human guidance. In the proposed human guidance-based priority experience replay mechanism, TDQA represents the proposed priority calculation scheme, and the number 1-4 indicates the flow sequence of data. The dotted line of the action signal represents that the framework allows intermittent human-in-the-loop guidance. }\label{Figure1}
\end{figure}

The contribution of this report can be summarized into three aspects. 1) We propose a novel prioritized experience utilization mechanism regarding human guidance in the RL process to improve performance. 2) We establish a comprehensive and holistic framework of human guidance-based RL by integrating the human-RL action switch scheme, behavior cloning-based objective function, human-demonstration replay method, and human-intervention reward shaping mechanism.3) We validate the superior performance of the proposed algorithm in solving challenging autonomous driving tasks comprehensively.

The remainder is organized as follows: a review of related work is provided in Section \ref{section2}, preliminaries for the proposed algorithm is introduced in Section \ref{section3}, Section \ref{section4} provides the proposed human guidance-based reinforcement learning algorithm, a human behavior model for substituting real human participant is established in Section \ref{section5}. Section \ref{section6} presents the problem formulation for the adopted autonomous driving tasks, Section \ref{section7} provides the experimental results, and the conclusion is drawn in Section \ref{section8}.

\section{Related Work}\label{section2}

Sample efficiency bottlenecks the training and performance of RL. Combining human guidance with RL is a promising way to mitigate the challenge. Three categories of human guidance have been integrated into RL. 

The first one is human feedback, where the human expert’s prior knowledge about the task could be used to qualitatively or quantitatively score the RL behaviors \cite{RN15}. In this manner, an RL-based unmanned ground vehicle was guided to run through a maze \cite{RN16}. However, the feedback is high-demanding on human ability and thus is no longer popular in recent studies. 

The second branch is human intervention. Intervention is a more direct manifestation of human knowledge than giving feedback. RL agents are devised to reduce their confidence in adopted actions if intervention occurs \cite{RN17}. \cite{RN18} employed real humans to detect catastrophic actions of DQN in playing Atari games, where humans were required to intervene in the training process to block the risk. It punished the human intervened scenes through the reward-shaping technique to prevent RL from reaching the unfavorable situations again. With a similar idea, \cite{RN19} devised a reward shaping-based PPO algorithm and made the RL agent complete the drone driving tasks under human interventions. In this report, the above-mentioned reward shaping scheme is also adopted, and more importantly, we provide a theoretical derivation and related discussion on the optimality of the human intervention-based reward shaping method.

The human demonstration is the other way to enhance RL performance. For discrete-action RL, the DQfD algorithm \cite{RN12} shaped the value function of DQN using human demonstration. \cite{RN20} presented a double experience buffer setting to separately store the RL data and human demonstrations. For more complicated RL with actor-critic architecture, the policy function is usually modified to be compatible with learning from demonstration. The behavior cloning objective has been added to the objective of the policy function to greatly improve learning efficiency, which is a milestone in the field. In this way, dexterous manipulations of high degree-of-freedom robotic arms \cite{RN21,RN22,RN23} and human-level game operation \cite{RN17} were achieved based on the state-of-the-art RL algorithms. In this report, the behavior cloning objective and its associated human guidance-based actor-critic framework is also integrated into our method. However, it is not reasonable for equal treatment on various demonstrations, which is adopted in existing methods. First, without optimizing the utilization, small-scale human demonstrations would be submerged in the numerous RL-generated data. Second, human guidance is variant due to the proficiency and status of participants, and some low-quality guidance can even impair RL performance. Noticeably, these drawbacks are to be overcome by the proposed prioritized experience utilization mechanism.

\section{Preliminaries}\label{section3}

In this section, we first introduce the notation and concept of off-policy actor-critic RL, and we then illustrate the prioritized experience replay mechanism. All three parts in this section are the base for the proposed human-guidance-based RL algorithm.

\subsection{Notation}\label{section3.1}

We consider a standard RL setting where an RL agent interacts with the controlled environment. Such an interaction can be formulated as a discrete-time Markov decision process (MDP), defined by the tuple \((\mathcal{S},\mathcal{A},R,p)\). The state-space \(\mathcal{S}\) consists of continuous state variables \(\mathbf{s}\) and the action space constitutes continuous action variables \(\mathbf{a}\). \(R(\cdot\vert \mathbf{s},\mathbf{a}):\mathcal{S}\times \mathcal{A}\to r\) is a reward function mapping the state-action pair \((\mathbf{s},\mathbf{a})\) to a deterministic reward value \(r\). The environment dynamics generates state transition probability \(p(\cdot\vert \mathbf{s},\mathbf{a}):\mathcal{S}\times \mathcal{A}\to P(\mathbf{s}^{\prime})\) mapping the state-action pair \((\mathbf{s},\mathbf{a})\) to the probability distribution over the next state \(\mathbf{s}^{\prime}\).

At each time step \(t\), the agent observes the state \(\mathbf{s}_t\in \mathcal{S}\) and sends the action \(\mathbf{a}_t\in \mathcal{A}\) to the environment, receiving the feedback of a scalar reward \(r_t\) and next state \(\mathbf{s}_{t+1}\). The agent’s behavior is determined by a policy \(\pi(\mathbf{a}_t\vert \mathbf{s}_t ):\mathcal{S}\to P(\mathbf{a}_t)\), which maps a state to the probability distribution over candidate actions. We utilize \(\rho_{\pi}\) to represent the state-action distribution induced by the policy \(\pi\).

\subsection{Off-policy Actor-critic Architecture}\label{section3.2}

The goal of RL is to optimize the policy which maximizes the expected value \(\mathcal{V}\) over the environment dynamics. A Bellman value function (also called critic) is established to estimate \(\mathcal{V}\) in a bootstrapping way. This value function is usually called \(Q\). Under an arbitrary policy \(\pi\), \(Q\) is defined as:
\begin{align}\label{eq1}
    Q^\pi (\mathbf{s}_t,\mathbf{a}_t)=r_t+\gamma\underset{(\mathbf{s}_{t+1},\mathbf{a}_{t+1})\sim\rho_\pi}{\mathbb{E}}[Q^\pi(\mathbf{s}_{t+1},\mathbf{a}_{t+1})],
\end{align}
where \(\gamma\in(0,1)\) is the discount factor. Then the policy function (also called actor) can be obtained concerning maximized \(Q\), represented as:
\begin{equation}\label{eq2}
    \pi=\arg\max_\pi\left[\underset{(\mathbf{s},\mathbf{a})\sim\rho_\pi}{\mathbb{E}} \left[Q^\pi\ (\mathbf{s},\mathbf{a})\right]\right],
\end{equation}

In practice, value function pursues the evaluation regarding only the optimal policy \(\pi^\star\), regardless of the policy executing the interaction. Therefore, RL decouples the policy evaluation process and the policy’s behavior, which makes the agent update in an off-policy manner. 

We use neural networks as the function approximator to formulate the actor and critic, the objectives are then reached through the loss functions. Specifically, the loss function of the critic \(\mathcal{L}^Q\), and the actor \(\mathcal{L}^\pi\) can be expressed as:
\begin{equation}\label{eq3}
    \mathcal{L}^Q\left(\theta\right)=r_t+\gamma\mathbb{E}\left[Q\left(\mathbf{s}_{t+1},\pi\left(\mathbf{s}_{t+1};\phi\right);\theta\right)\right]-Q\left(\mathbf{s}_t,\mathbf{a}_t;\theta\right),
\end{equation}
\begin{equation}\label{eq4}
    \mathcal{L}^\pi\ \left(\phi\right)=-Q\left(\mathbf{s}_t,\pi\left(\cdot\vert \mathbf{s}_t;\phi\right);\theta\right),
\end{equation}
where \(Q(\cdot;\theta)\) represents the parameterized critic function and \(\theta\) represents the parameters of the critic network, \(\pi(\cdot;\phi)\) represents the parameterized actor function and \(\phi\) represents the parameters of the actor network. Hereinafter, the parameter \(\theta\) and \(\phi\) can be omitted if no ambiguity exists.

\subsection{Prioritized Experience Replay Mechanism}\label{section3.3}
The experience replay mechanism establishes an experience buffer to store the data at each interaction. Accordingly, the RL agent can retrieve data generated by previous policies from the buffer for policy evaluation and improvement.

Given an arbitrary time step \(t\), the interaction between the RL agent and the environment generates a transition tuple, which is stored into the experience replay buffer as:
\begin{equation}\label{eq5}
    \mathcal{B}\gets\zeta_t=(\mathbf{s}_t,\mathbf{a}_t,r_t,\mathbf{s}_{t+1}).
\end{equation}

Conventionally, the experience in the buffer is retrieved from the buffer using uniform random sampling. In a more efficient method, prioritized experience replay mechanism (PER) \cite{RN24}, the data sample is subjected to a nonuniform distribution \(\mathcal{I}\), and its probability mass function \(p_\mathcal{I}\sim\mathcal{I}\) can be expressed as:
\begin{equation}\label{eq6}
    p_\mathcal{I}\ (i)=\frac{\mathbf{p}_{i}^{\alpha}}{\sum_k \mathbf{p}_{k}^{\alpha} },
\end{equation}
where \(\alpha\in[0,1]\) is the scaling coefficient, \(\mathbf{p}\) represents the priority of each tuple \(i\), which is determined by the temporal difference (\(TD\)) error \(\delta^{TD}\) and expressed as:
\begin{align}\label{eq7}
\begin{split}
        \mathbf{p}_i&=\vert \delta_{i}^{TD} \vert +\varepsilon \\
        &=\vert r_i+\gamma\cdot Q\left(\mathbf{s}_{i+1},\pi\left(\cdot\vert\mathbf{s}_{i+1};\phi\right);\theta\right)-Q\left(\mathbf{s}_i,\mathbf{a}_i;\theta\right)\vert+\varepsilon,
\end{split}
\end{align}
where \(\varepsilon\in\mathbb{R}^{+}\) is a small positive constant to guarantee the probability larger than zero. A larger \(TD\) error indicates an experience worth learning to a higher extent. Thus, the \(TD\) error-based prioritized experience replay mechanism can improve the RL training efficiency.

\section{Human-in-the-loop Reinforcement Learning}\label{section4}

In this section, we first summarize the human behaviors in the RL training process which can be leveraged in the algorithm design. Based on that, we establish an actor-critic framework adapting to human guidance. Then, two modules are proposed to further improve RL in the context of human guidance: a novel prioritized experience replay mechanism concerning human demonstration, and a reward shaping technique concerning human intervention. Finally, a holistic human-in-the-loop RL algorithm is instantiated using the above components. 

\subsection{Human Guidance Behavior in RL Training}\label{section4.1}
We define two useful human guidance behaviors in the RL training process: intervention and demonstration. 

\subsubsection*{Intervention}
Human participants recognize RL interaction scenes and identify whether a guidance behavior should be conducted based on their prior knowledge and reasoning abilities. If human participants decide to intervene, they can manipulate the equipment to get the control authority (partially or totally) from the RL agent. The intervention generally happens when the RL agent conducts catastrophic actions or is stuck in local optima traps. Thus, RL could learn to avoid unfavorable situations from the intervention.

\subsubsection*{Demonstration}
Human participants perform their actions when an intervention event happens, which generates the corresponded reward signal and next-step state. The generated transition tuple can be seen as a piece of demonstration data since it is induced by human policy instead of the RL’s behavior policy. RL algorithm could learn human behavior from the demonstration.

State-of-the-art human-guidance-based RL algorithms have been integrating learning from intervention (LfI) \cite{RN18}, and learning from demonstration (LfD) \cite{RN25}. In this report, both LfI and LfD will be employed in the proposed architecture. Specifically, LfI based on the reward shaping technique is utilized in the reward function definition, while LfD plays its role in the underlying principles of the algorithm.

\subsection{Human-guidance-based Actor-critic Framework}\label{section4.2}

In this section, we elaborate on the interaction mechanism and learning objective of the proposed human-guidance-based actor-critic RL algorithm.

First, we focus on the interaction mechanism. In the standard interaction between RL and environment, RL’s behavior policy will output actions to explore the environment. Given an off-policy actor-critic RL, the above process is shown as:
\begin{equation}\label{eq8}
    \mathbf{a}_{t}^{RL}=\pi(\cdot\vert\mathbf{s}_t;\phi)+\xi_a\odot\mathbf{a}_t^{\text{std}},
\end{equation}
where \({a}_t^{std}\in \mathbb{R}^{\text{dim}(\mathcal{A})} \) is a training-dependent variable that scales the exploration noise, \(\odot\) represents the Hadamard product and \(\xi_a\sim \mathcal{N}(0,\mathbf{I}^{\text{dim}\left(\mathcal{A}\right)} )\)

We give full authority to human participants whenever they decided to take control in the training loop of RL. Thus, the eventual action is filtered by a mask as:
\begin{equation}\label{eq9}
    \mathbf{a}_t=(\mathbf{I}^{dim(\mathcal{A})} -\mathbf{\Delta}_t )\cdot\mathbf{a}_t^{RL}+\mathbf{\Delta}_t\cdot\mathbf{a}_t^H,
\end{equation}
where \(\mathbf{a}_t^H\) represents the action from the human participant’s policy, \(\mathbf{\Delta}_t\in\mathbb{R}^{dim(\mathcal{A})}\) is a demonstration mask: it is an identity matrix when human demonstration happens and a zero matrix in the non-demonstrated step.

The interaction transition tuple \(\zeta\) will be recorded and stored into the experience replay buffer once the action is sent to the environment. In particular, actions from the human policy and the RL policy are stored in the same buffer. For this context, the new transition tuple \(\zeta\) is defined to discriminate human demonstrations from normal RL experiences as:

\begin{equation}\label{eq10}
    \zeta_i=(\mathbf{s}_i,\mathbf{a}_i,r_i,\mathbf{s}_{i+1},\mathbf{\Delta}_i).
\end{equation}
Then, we focus on the learning objective. Given a batch of transition tuples with batch size \(N\), there could exist data \(\zeta_{N_1}\) from the RL policy and \(\zeta_{N_2=N-N_1}\) from the human policy. The critic network, based on the optimal value function, can learn from both policies. Thus, its loss function is calculated as:
\begin{align}\label{eq11}
    \begin{split}
        &\mathcal{L}^Q (\theta)= \\
        &\frac{1}{N_1} \sum_i^{N_1} \Vert r_i+\gamma Q(\mathbf{s}_{i+1},\pi(\cdot\vert \mathbf{s}_{i+1});\theta)-Q(\mathbf{s}_i,\mathbf{a}_i^{RL};\theta)\Vert_2^2 \\
&+\frac{1}{N_2} \sum_j^{N_2}\Vert r_j+\gamma Q(\mathbf{s}_{j+1},\pi(\cdot\vert \mathbf{s}_{j+1});\theta)-Q(\mathbf{s}_j,\mathbf{a}_j^H;\theta)\Vert_2^2 .
    \end{split}
\end{align}

Given the data from the human policy, the actor should learn from these demonstrations in addition to maximizing the critic’s value. Hence, we devise the loss function of the actor network considering behavior cloning as:
\begin{equation}\label{eq12}
\begin{split}
    \mathcal{L}^\pi\ (\phi)&=
\frac{1}{N_1} \sum_i^{N_1}\left[-Q(\mathbf{s}_i,\pi(\cdot\vert \mathbf{s}_i;\phi);\theta)\right]\\
&+\frac{1}{N_2}  \sum_j^{N_2}[\omega\cdot \Vert \mathbf{a}_j^H-\pi(\cdot\vert \mathbf{s}_j;\phi)\Vert_2^2],
\end{split}
\end{equation}
where \(\omega\) is a manually determined constant that weighs the importance of behavior cloning.  

It is noticeable that the mean squared error (MSE) losses involved in the above formulas are for exemplified calculation, meaning that they can be alternated by any loss functions.

\subsection{Prioritized Human Demonstration Replay}\label{section4.3}
In this section, we put forward a novel PER mechanism for human demonstration. 

Human demonstrations are generally more critical than most exploration from RL’s behavior policy due to prior knowledge and reasoning ability. Thus, a more effective method is needed to weigh human demonstrations among the buffer. We propose an advantage-based metric instead of TD-error of the normal PER to establish the prioritized replay mechanism.

First, we define an advantage measure regarding the human demonstration against the RL’s behavior policy. Since the critic, i.e., value function, can evaluate the policy, we calculate the difference between the \(Q\) value of the human action and that of the RL action. Given a human-demonstration transition tuple \((\mathbf{s}_i,\mathbf{a}_i=\mathbf{a}_i^H,r_i,\mathbf{s}_{i+1})\), the priority level \(\mathbf{p}\) is defined as:
\begin{equation}\label{eq13}
    \mathbf{p}_i\delequal \vert \delta_i^{TD}\vert +\varepsilon+\exp \left[Q(\mathbf{s}_i,\mathbf{a}_i^H;\theta)-Q(\mathbf{s}_i,\pi(\cdot\vert \mathbf{s}_i);\theta)\right],
\end{equation}
where \(\exp\) is the exponential function to guarantee the non-negative advantage value.

We call the last term of the Eq. \ref{eq13} the \(Q\)-advantage term, which evaluates to what extent should a specific human-demonstration tuple be retrieved except the \(TD\)-error metric. Through the RL training process, the RL agent’s ability varies and the priority level of one human-demonstration tuple changes accordingly, which gives rise to a dynamic priority mechanism. We abbreviate \(Q\)-advantage as \(QA\) and call the above mechanism \textbf{TD\textit{QA}} to illustrate it combines two metrics as the measurement of human guidance. The \(QA\) term is removed for non-demonstration tuples when calculating the above equation, thus, the priority levels of non-demonstration data are aligned with those in the conventional PER.

In this manner, the experience in the buffer \(\mathcal{B}\) subjects to a distribution \(\mathcal{I}^\prime\), and the probability mass function of the experience distribution \(\mathbf{p}_{\mathcal{I}^\prime }\sim\mathcal{I}^\prime\) can be expressed as:
\begin{equation}\label{eq14}
    \mathbf{p}_{\mathcal{I}^\prime } (i)=\frac{\mathbf{p}_i^\alpha}{\sum_k\mathbf{p}_k^\alpha}.
\end{equation}

We inherent the optimization trick of the conventional PER by using a sum-tree structure to store transition data, and the updating and sampling can be conducted with a complexity of \(O(\log N)\). 

The priority mechanism introduces the bias to the estimation of the expectation of the value function since it changes the experience distribution in the buffer. Biased value network \(Q\) could have little impact on the RL asymptotic performance, yet it may affect the stability and robustness of the mature policy in some situations. As an optional operation, we can anneal the bias by introducing the importance-sampling weight to the loss function of the value network. The importance-sampling weight of a transition i is calculated as:
\begin{equation}\label{eq15}
    w_{IS} (i)=\left[\mathbf{p}_{\mathcal{I}^\prime}(i) \right]^{-\beta} .
\end{equation}
where \(\beta\in[0,1]\) is a coefficient: the fully non-uniform sampling occurs if \(\beta=1\), and fully uniform sampling occurs if \(\beta=0\). \(\beta\) will gradually decrease to zero along with the training process.

The importance-sampling weight can be added to the loss function of the value network, expressed as:
\begin{equation}\label{eq16}
\begin{split}
    &\mathcal{L}^Q(\theta)=\\
&\underset{\zeta_{i\sim \mathcal{I}^\prime }}{\mathbb{E}} w_{IS} (i)\left(r_i+\gamma Q(\mathbf{s}_{i+1},\pi(\cdot\vert \mathbf{s}_{i+1});\theta)
-Q\left(\mathbf{s}_{i},\mathbf{a}_{i};\theta\right)\right)].
\end{split}
\end{equation}
Through the proposed PER, we prioritize human guidance over RL experiences. Moreover, high-quality demonstrations are prioritized to more extents, and the utilization efficiency of human demonstrations can be enhanced.

\subsection{Human-intervention-based Reward Shaping}\label{section4.4}
In this section, we introduce the human-intervention-based reward shaping technique. Naturally, there is no need for humans to provide guidance if the being-trained RL agent is executing a good policy. Therefore, to minimize the human workload, we assume human participants would intervene in the training process only when RL’s behaviors are unfavorable. In this context, the intervention event can be seen as a negative signal and the corresponding state should be avoided by RL. This negative feedback can be realized by reward shaping, which will be detailed in this section. 

\begin{figure}[tbh]
\begin{center}
\noindent
  \includegraphics[width=\linewidth]{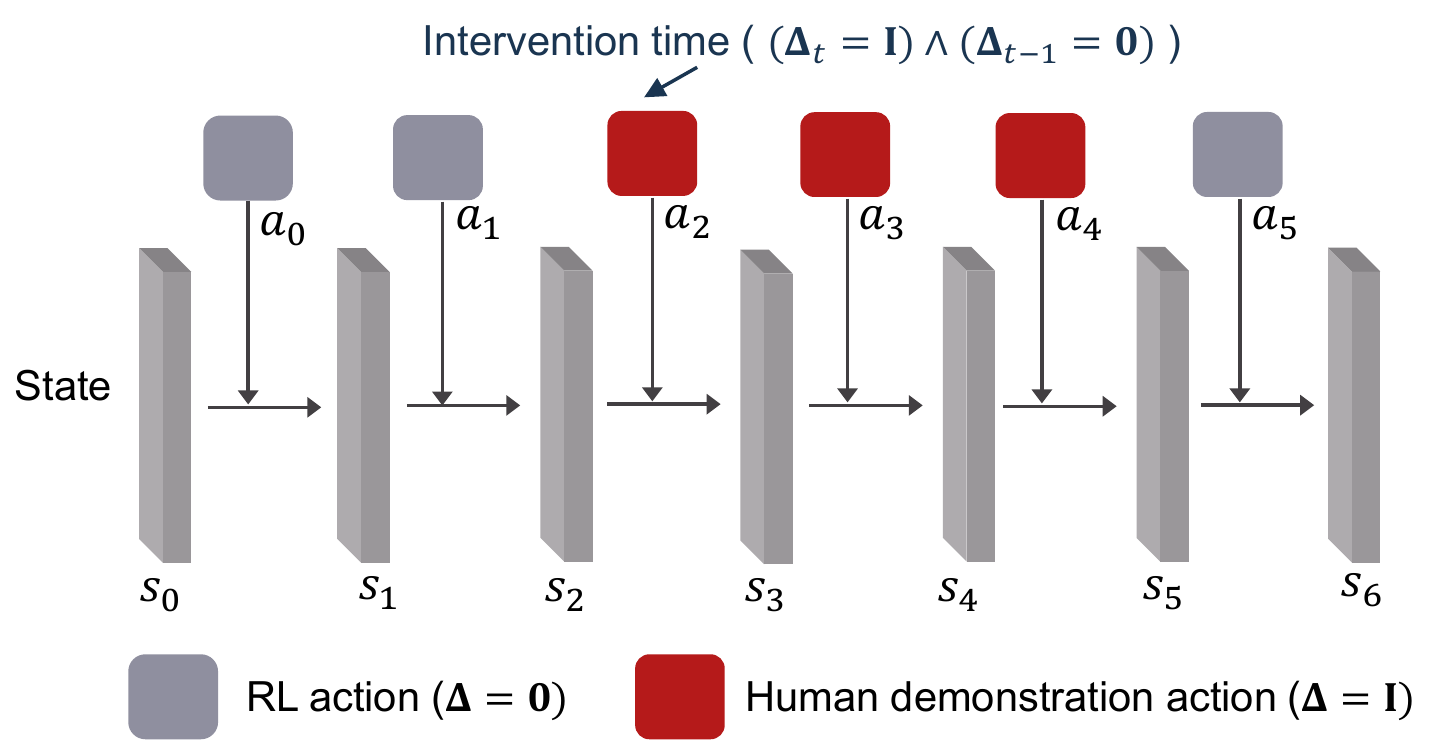}
  \end{center}
    \caption[Illustration of the intervention time step.]{Illustration of the intervention time step. In a time-sequential MDP, the first time step which is controlled by human demonstration is taken as the intervention time. }\label{Figure2}
\end{figure}

We first identify the intervention event. Recall Eq. \ref{eq9} defines a mask \(\mathbf{\Delta}_t\), which is a time-sequential variable recording if the action \(\mathbf{a}_t\) is conducted by human demonstration. Hence, the intervention time, i.e., the start time of a period of human demonstrations, can be represented by \((\mathbf{\Delta}_t=\mathbf{I})\wedge(\mathbf{\Delta}_{t-1}=\mathbf{0})\) in a time-sequential training process of RL, as illustrated in Fig. \ref{Figure2}. It is noted that only the intervention time is to be punished by the reward shaping, since the states after humans intervention will be substituted by human demonstrations and cannot be seen as unfavorable. For instance, in Fig. \ref{Figure2}, \(\mathbf{s}_2\) is penalized while \(\mathbf{s}_3\) and \(\mathbf{s}_4\) are not.

Then, we can shape the vanilla reward function with an additional penalized function:
\begin{equation}\label{eq17}
    r_t^{\text{shape}}=r_t+r_{\text{pen}}[(\mathbf{\Delta}_t=\mathbf{I}^{\text{dim}(\mathcal{A} )}\land(\mathbf{\Delta}_{t-1}=\mathbf{0}^{\text{dim}(\mathcal{A})} )],
\end{equation}
where \(r_t^{shape}\) is the reward after shaping, \(r_{pen}\) is a scalar that weighs the intervention penalty. 

The theoretical performance of this reward shaping scheme is analyzed in Appendix \ref{appendix}.

\subsection{Prioritized human-in-the-loop RL algorithm}\label{section4.5}
In this section, we integrate all the above components and propose a holistic RL algorithm considering human guidance. It is noted that although the human guidance-based actor-critic framework in Section \ref{section4.2} and reward shaping in Section \ref{section4.4} are components of the algorithm, they are not the major novelty of this report. To highlight our core idea of the prioritized human-demonstration replay mechanism of Section \ref{section4.3}, we name the proposed algorithm as \textbf{P}rioritized \textbf{H}uman-\textbf{I}n-the-\textbf{L}oop (\textbf{PHIL}) RL. 

Specifically, we obtain the holistic human-in-the-loop RL configuration through equipping the human-guidance-based actor-critic framework with prioritized human-demonstration replay and intervention-based reward shaping mechanisms. We instantiate the PHIL algorithm based on one of the state-of-the-art off-policy RLs, i.e., twin delayed deep deterministic policy gradient (TD3) \cite{RN26}. We also remind the above components are adaptive to various off-policy actor-critic RL algorithms.

In TD3, the target networks, namely, the target critic \(Q^\prime\) with parameter \(\theta^\prime\) and target actor \(\pi^\prime\) with parameter \(\phi^\prime\) are utilized to stabilize the algorithm update. And the actor’s output becomes a deterministic value instead of a sample from the probability distribution. 

Considering the role of human participants in the RL interaction process, the eventual action in the time step \(t\) can be expressed as:
\begin{subequations}\label{eq18}
\begin{equation}
    \mathbf{a}_t=(\mathbf{I}^{\dim(\mathcal{A})} -\mathbf{\Delta}_t )\cdot\mathbf{a}_t^{RL}+\mathbf{\Delta}_t\cdot\mathbf{a}_t^H,
\end{equation}
\begin{equation}
    \mathbf{a}_t^{RL}=\pi(\cdot\vert\mathbf{s}_t)+\text{clip}\left(\epsilon,-c,c\right), \epsilon\sim\mathcal{N}\left(\mathbf{0},\mathbf{\Sigma} \right),
\end{equation}
\end{subequations}
where \(\epsilon\) is a noise coefficient vector dependent on the training proceed, \(c\) is the bounding of the exploratory action, \(\mathbf{\Sigma}\) is the covariance matrix of the Gaussian distribution \(\mathcal{N}\).

A transition tuple is obtained through the above interaction step and stored into the proposed human-demonstration experience buffer as:
\begin{equation}\label{eq19}
    \mathcal{B}\gets\zeta_t=(\mathbf{s}_t,\mathbf{a}_t,r_t,\mathbf{s}_{t+1},\mathbf{\Delta}_t).
\end{equation}
Stored experience tuples will be retrieved for the training of the value and policy networks. An arbitrary transition tuple \(\zeta\) with index \(i\) would be retrieved by the probability \(p\), which is calculated by:
\begin{equation}\label{eq20}
    p(i)=\frac{\mathbf{p}_i^\alpha}{\sum_k\mathbf{p}_k^\alpha },
\end{equation}
wherein the priority level \(\mathbf{p}\) is:
\begin{subequations}\label{eq21}
\begin{equation}
    \mathbf{p}_t=\vert\delta_t^{TD} \vert + \varepsilon+(\mathbf{\Delta}_t=\mathbf{I}^{\dim(\mathcal{A})} )\cdot QA,
\end{equation}
\begin{equation}
    QA=\exp\left[Q^\prime \left(\mathbf{s}_t,\mathbf{a}_t ;\theta^\prime \right)-Q^\prime\left(\mathbf{s}_t,\pi(\cdot\vert\mathbf{s}_t;\phi);\theta^\prime \right)\right],
\end{equation}
\end{subequations}

It is noticeable that \(Q\)-advantage is calculated by the target critic network \(Q^\prime\) to avoid unstable updates.

Supposing a tuple with size \(N\) contains \(N_1\) amount of non demonstration tuples and \(N_2=N-N_1\) human demonstration ones, the loss function of the critic can be expressed as:
\begin{align}\label{eq22}
\begin{split}
        &\mathcal{L}^{Q_k} (\theta)=
        \\&\frac{1}{N_1} \sum_i^{N_1}\Vert r_i+\gamma Q_l^\prime \left(\mathbf{s}_{i+1},\pi^\prime (\cdot\vert \mathbf{s}_{i+1} ) \right)-Q_k \left(\mathbf{s}_i,\mathbf{a}_i^{RL}\right)\Vert_2^2 
        \\&+\frac{1}{N_2} \sum_j^{N_2}\Vert r_j+\gamma Q_l^\prime \left(\mathbf{s}_{j+1},\pi^\prime \left(\cdot\vert \mathbf{s}_{j+1} \right) \right)-Q_k \left(\mathbf{s}_j,\mathbf{a}_j^H\right)\Vert_2^2
\end{split}
\end{align}
where \(k=1,2\) represents the index of two \(Q\) networks. Note the double \(Q\) network trick, which utilizes the smaller \(Q\) value of two networks (\(l=\min\{1,2\}\)), is introduced here to eliminate the value overestimation effect.

The loss function of the actor is calculated as:
\begin{equation}\label{eq23}
\begin{split}
    \mathcal{L}^\pi(\phi)&=
\frac{1}{N_1}\sum_i^{N_1}\left[-Q_1 \left(\mathbf{s}_i,\pi\left(\cdot\vert\mathbf{s}_i;\phi\right);\theta\right)\right] \\
&+\frac{1}{N_2}  \sum_j^{N_2}\left[\omega\cdot\Vert \mathbf{a}_j^H-\pi\left(\cdot\vert \mathbf{s}_j;\phi\right)\Vert_2^2\right].
\end{split}
\end{equation}

It is noticeable that the training of the policy network can be delayed stabilizing the algorithm, that is, the actor would be updated once given the critic updating \(d\) times.

Lumping all factors, the complete version of the proposed algorithm is provided in Algorithm \ref{alg1}.

\SetKwInOut{Input}{Input}
\begin{algorithm}[tb]
\caption{PHIL-TD3 }\label{alg1}
\Input{maximum episode number $E$, episode duration $T$, batch size $N$, policy network $\pi(\cdot;\phi)$, value networks $Q_1(\cdot;\theta_1),Q_2(\cdot;\theta_2)$, target networks, empty buffer \(\mathcal{B}\), learning rate $lr^Q, lr^{\pi}$, priority coefficient $\alpha$, policy update factor $d$, soft update factor $\tau$. }

 \For{episode=1 to \(E\)}{
  Observe the initial state \(\mathbf{s}_1\)\;
  \For{t=1 to \(T\)}{
    \uIf{human intervene}{
    Adopt human action \(\mathbf{a}_t=\mathbf{a}_t^{\mathcal{H}}\), set \(\mathbf{\Delta}_t=\mathbf{I}\)\;
    }
    \Else{
    Select RL action \(\mathbf{a}_t=\mathbf{a}_t^{RL}=\pi(\cdot\vert\mathbf{s}_t;\phi)+\epsilon\), set \(\mathbf{\Delta}_t=\mathbf{0}\) \;
    }
    Observe reward \(r_t\) and new state \(\mathbf{s}_{t+1}\) \;
    Shape reward \(r_t=r_t+r_{pen}\cdot[(\mathbf{\Delta}_t=\mathbf{I})\land(\mathbf{\Delta}_{t-1}=\mathbf{0})]\)\;
    Store tuple \(\left(\mathbf{s}_t,\mathbf{a}_t,r_t,\mathbf{s}_{t+1}^{\prime},\mathbf{\Delta}_t\right)\) in \(\mathcal{B}\) with priority \(\mathbf{p}_t= \max_{i<t}\mathbf{p}_{i}\) \;
    Sample \(N\) tuples from \(\mathcal{B}\) with probability  \(p(i)=\mathbf{p}_i^\alpha/(\sum_k\mathbf{p}_k^\alpha)\)\;
    Update priority by Eq. \ref{eq13} \;
    Update value networks by \(\theta_{k=1,2}\gets\theta_{k=1,2}-lr^{Q_{k=1,2} }\cdot\nabla_\theta \mathcal{L}^Q (\theta) \)\; 
    \If{\(t\) mod \(d\)}{
    Update policy network by \(\phi\gets\phi-lr^{\pi}\cdot\nabla_{\phi} \mathcal{L}^{\pi} (\phi)\)\;
    Update target networks \;
    }
    }
 }
\end{algorithm}

\section{Human Policy Model}\label{section5}

In this section, a human policy model is established in conjunction with PHIL-RL. The model can relieve human workload in the human-in-the-loop RL process by imitating the behavior policy of actual human participants. 

We train a regression model to imitate human policy simultaneously with RL, and this policy model can substitute humans when necessary. Consider human behaviors in the RL training process: the human participant is required to intervene in the control process when he/she believes the agent poorly behaves. Human interventions are usually imposed to the loop in an intermittent way and demonstrations are incrementally supplemented into the training set (buffer). Thus, we train the human policy model leveraging an online- and incremental-based imitation learning algorithm, i.e., the Data Aggregation (DAgger) \cite{RN27}, which is free from offline large-scale collection of the demonstration data.

It is noted that the human policy model does not aim to accurately mimic expert-level humans. In practice, the common situation is humans who cooperate with RL are non-proficient, and humans' performance can fluctuate with mental and physical status. Thus, we do not require the model to achieve expert-level performance. In essence, the human policy model is to provide roughly correct demonstrations for the RL agent.

Denoting the human policy model with \(\mathcal{H}\), the objective is to find a policy \(\pi^\mathcal{H}\) minimizing its difference \(\mathbf{d}\) with the human policy \(\pi^H\):
\begin{equation}\label{24}
    \pi^\mathcal{H}=\arg\min_{\pi}\left[\mathbb{E}_{\mathbf{s}_i} \left[\mathbf{d}(\mathbf{s}_i,\pi^H )\right]\right].
\end{equation}

We initialize model \(\mathcal{H}\) by replicating an untrained RL policy network. After the first human-intervention event, model \(\mathcal{H}\) is established as:
\begin{equation}\label{eq25}
    \pi_0^\mathcal{H}\ (\varphi)\gets\pi(\phi).
\end{equation}
In subsequent episodes, we retrieve human demonstrations to conduct incremental learning with the loss function:
\begin{equation}\label{eq26}
    \mathcal{L}^\mathcal{H} (\varphi)=\mathbb{E}_{(\mathbf{s}_i,\mathbf{a}_i^H)} \left[\Vert \mathbf{a}_{i}^H-\pi^H (\cdot\vert \mathbf{s}_{i};\phi)\Vert_2^2 \right],
\end{equation}
and update the model with the gradient method as:
\begin{equation}\label{eq27}
    \pi_{e+1}^\mathcal{H}\gets\pi_e^\mathcal{H}-lr^{\pi^\mathcal{H}} \cdot\nabla_\varphi\ \mathcal{L}^\mathcal{H}\ (\varphi),
\end{equation}
where \(e\) is the episode number of the RL process. 

Through the above update, model \(\mathcal{H}\) would gradually be competent to accurately mimic human policy, and accordingly, substitute human participants to assist RL. It is noticeable that if using this human policy model to cooperate with PHIL, the activation conditions of model \(\mathcal{H}\) shall be manually defined varying to specific environments.

\section{Problem Formulation}\label{section6}

The proposed PHIL-TD3, like most RLs, can be universally adapted to any continuous-action decision and control tasks. Here we choose the end-to-end autonomous driving problem as the object, evaluating our algorithm in two challenging driving scenarios. Note that the RL-based autonomous driving problem can be solved by numerous reasonable settings, while the problem formulation in this section is to provide a fair environment for algorithm evaluation and comparison. 

In this section, two challenging autonomous driving scenarios are introduced to evaluate the control and optimization performance of the proposed algorithm, then the standard optimization setting is established.

\subsection{Autonomous Driving Scenarios}\label{section6.1}
RL is better suited to the challenging driving tasks compared to rule-based or model optimization-based approaches due to its high representational and generalization capabilities. We choose two scenarios, shown in Fig. \ref{Figure3}, to evaluate the RL performance. These scenarios are challenging to conventional autonomous driving strategies due to complex combinatorial relationships. 

\subsubsection*{Unprotected left-turn}
This scenario is illustrated in Figs. \ref{Figure3}(a-b). The ego vehicle, i.e., the controlled vehicle, in the side road is trying to make a left turn and merge into the main road. No traffic signals guide the vehicles in the intersection. We assume the lateral path of the ego vehicle is planned by other techniques, while the longitudinal control is assigned to the RL agent. Surrounding vehicles are initialized with varying random velocities ranging from [4, 6] m/s and controlled by the intelligent driver model (IDM) \cite{RN28} to execute lane-keeping behaviors. All surrounding drivers are set with aggressive characteristics, meaning that they would not yield to the ego vehicle. The control interval for all vehicles is set as 0.1 seconds. 

\begin{figure}[b]
\begin{center}
\noindent
  \includegraphics[width=\linewidth]{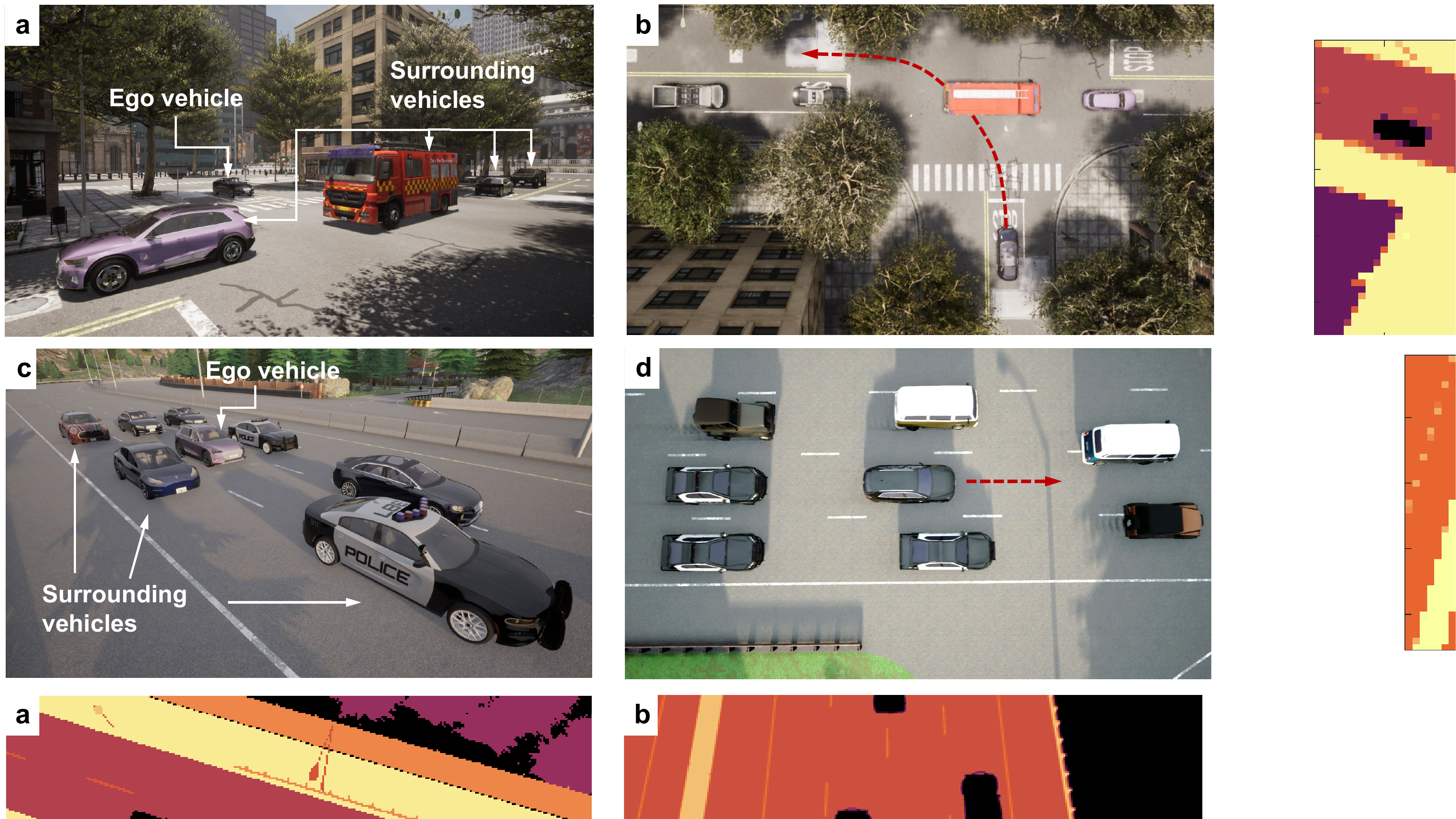}
  \end{center}
    \caption[Task environment configuration.]{Task environment configuration. a, the devised unprotected left-turn scenario in T-intersection, established in CARLA. b, the bird-view of the left-turn scenario, where the dotted line indicates a left-turn trajectory. c, the devised congestion scenario in the highway, established in CARLA. d, the bird-view of the congestion scenario, where the dotted line shows a car-following trajectory.}
    \label{Figure3}
\end{figure}

\subsubsection*{Highway congestion}
This scenario is illustrated in Figs. \ref{Figure3}(c-d). The ego vehicle is stuck in severe congestion and tightly surrounded by other vehicles; thus, it is trying to shrink the gap with its leading vehicles and conduct the car-following task with the target velocity. We assume the longitudinal control is completed by IDM with a target velocity of 6 m/s, while the lateral control is assigned to the RL agent. Surrounding vehicles are initialized with the velocity ranging from [4, 6] m/s and controlled by IDM to execute car-following behaviors. The control interval for all vehicles is set as 0.1 seconds. The crowded surrounding vehicles cover the lane markings and no specific one leading vehicle in the ego lane, which can lead the conventional lateral-planning approaches to be invalid in such a scenario.

\begin{figure}[tb]
\begin{center}
\noindent
  \includegraphics[width=\linewidth]{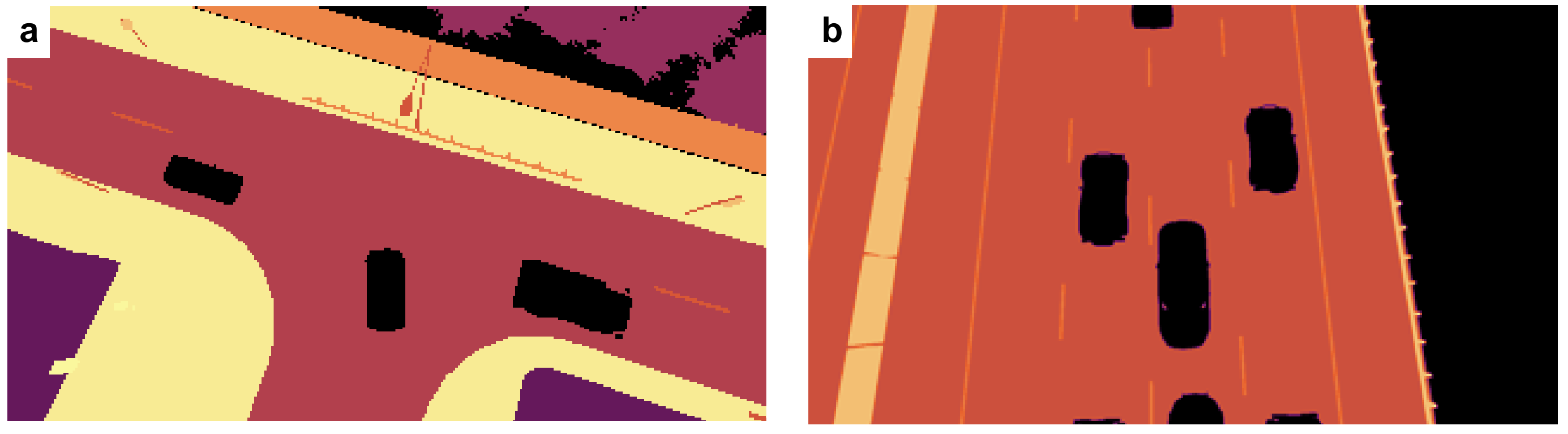}
  \end{center}
    \caption[State space illustration: bird's-eye-view semantic graph.]{State space illustration: bird's-eye-view semantic graph. a, the left-turn scenario, b, the congestion scenario. }\label{Figure4}
\end{figure}

\subsection{RL-based problem definition}\label{section6.2}

\subsubsection*{State}: The bird-view semantic graphs are taken as the state information for the RL agent, shown in Fig. \ref{Figure4}. Two consecutive frame images are used to constitute one state variable to enable temporal perception. We scale the camera-captured image to a smaller size to relieve the computational burden. The state variable can be expressed as:
\begin{equation}\label{eq28}
    \mathbf{s}_t=\{\mathbf{p}_{t-1}, \mathbf{p}_t \vert \mathbf{p}\in [0,1]\},
\end{equation}
where \(\mathbf{p}\in\mathbb{R}^{45\times80}\) is a pixel matrix of which the elements are normalized. 

\subsubsection*{Action}: The action variable can be either lateral or longitudinal commands adaptive to different requirements. For the lateral control task in the congestion scenario, we choose the angle of the steering wheel as the action, expressed as:
\begin{equation}\label{eq29}
    \mathbf{a}_t=\left[\delta_t \vert \delta\in \left[-5\lambda\pi,5\lambda\pi \right]\right],
\end{equation}
where \(\delta\in\mathbb{R}^1\) is the continuous steering command, of which the negative value indicates a left-turn command and the positive value corresponds to a right-turn command, and \(\lambda\) is the scaling factor that limits the steering range. 

For the longitudinal control in the left-turn scenario, we choose the accelerating/braking pedal aperture, expressed as:
\begin{equation}\label{eq30}
    \mathbf{a}_t=\left[\eta_t\ \vert \eta\in \left[-1,1\right]\right],
\end{equation}
where \(\eta\in\mathbb{R}^1\) is the continuous pedal aperture, of which the negative value indicates a braking command and the positive value corresponds to an accelerating command.

\subsubsection*{Reward}: The goal of an autonomous vehicle is to rapidly complete traffic scenarios through safe and smooth driving behaviors. RL-based driving strategy achieves this by an appropriate reward function definition. The reward schemes of the two tasks in Fig.\ref{Figure3} can be respectively defined as:
\begin{equation}\label{eq31}
\begin{split}
    R^{\text{left-turn}} \left(\cdot\vert \mathbf{s}_t,\mathbf{a}_t \right)=
&r_{\text{goal}}\cdot\mathbf{1}\left(\mathbf{s}_t\in\mathcal{S}_{\text{goal}} \right)\\
+&r_{\text{fail}}\cdot\mathbf{1}\left(\mathbf{s}_t\in\mathcal{S}_{\text{fail}} \right)+r_{\text{speed}} \left(\mathbf{s}_t\right) ,
\end{split}
\end{equation}
\begin{equation}\label{eq32}
\begin{split}
    R^{\text{congestion}} \left(\cdot\vert \mathbf{s}_t,\mathbf{a}_t \right)=
&r_{\text{goal}}\cdot\mathbf{1}\left(\mathbf{s}_t\in\mathcal{S}_{\text{goal}} \right)\\
+&r_{\text{fail}}\cdot\mathbf{1}\left(\mathbf{s}_t\in\mathcal{S}_{\text{fail}} \right)+r_{\text{steer}} \left(\mathbf{s}_t\right),
\end{split}
\end{equation}
where \(r_{\text{goal}} = 10\) and \(\mathcal{S}_{\text{goal}}\) is the set of goal states where the ego vehicle successfully completes the scenario; \(r_{\text{fail}} = -10\) and \(\mathcal{S}_{\text{fail}}\) is the set of failure states where the collision occurs; while \(r_{\text{speed}} = -\Vert v_{\text{ego}} - v_{\text{target}}\Vert\) is the reward that encourages the target speed, i.e., \(5m/s\) set in this section; \(r_{\text{steer}} = \Vert \delta_t - \delta_{t-1}\Vert\) is the reward that discourages frequent steering behaviors. It is noticeable that both \(r_{\text{speed}}\) and \(r_{\text{steer}}\) can implicitly play a role in promoting smooth driving. Additionally, we set the penalty term \(r_{\text{pen}}\) in Eq. \ref{eq17} the same as \(r_{\text{fail}}\) and incremented it to the above reward when human intervention occurs.

\subsubsection*{Function approximator}: The function approximators of the value and policy functions are concrete by deep convolutional networks, as shown in Fig. \ref{Figure5}.

\subsubsection*{Auxiliary functions}: We define some auxiliary control functions independent of the RL action to achieve a complete control suit. When RL manipulates the steering wheel, the longitudinal control is achieved by an IDM. When RL manipulates the pedal aperture, the lateral motion target is to track the planned waypoints through a proportional-integral (PI) controller.

\begin{figure}[tb]
\begin{center}
\noindent
  \includegraphics[width=\linewidth]{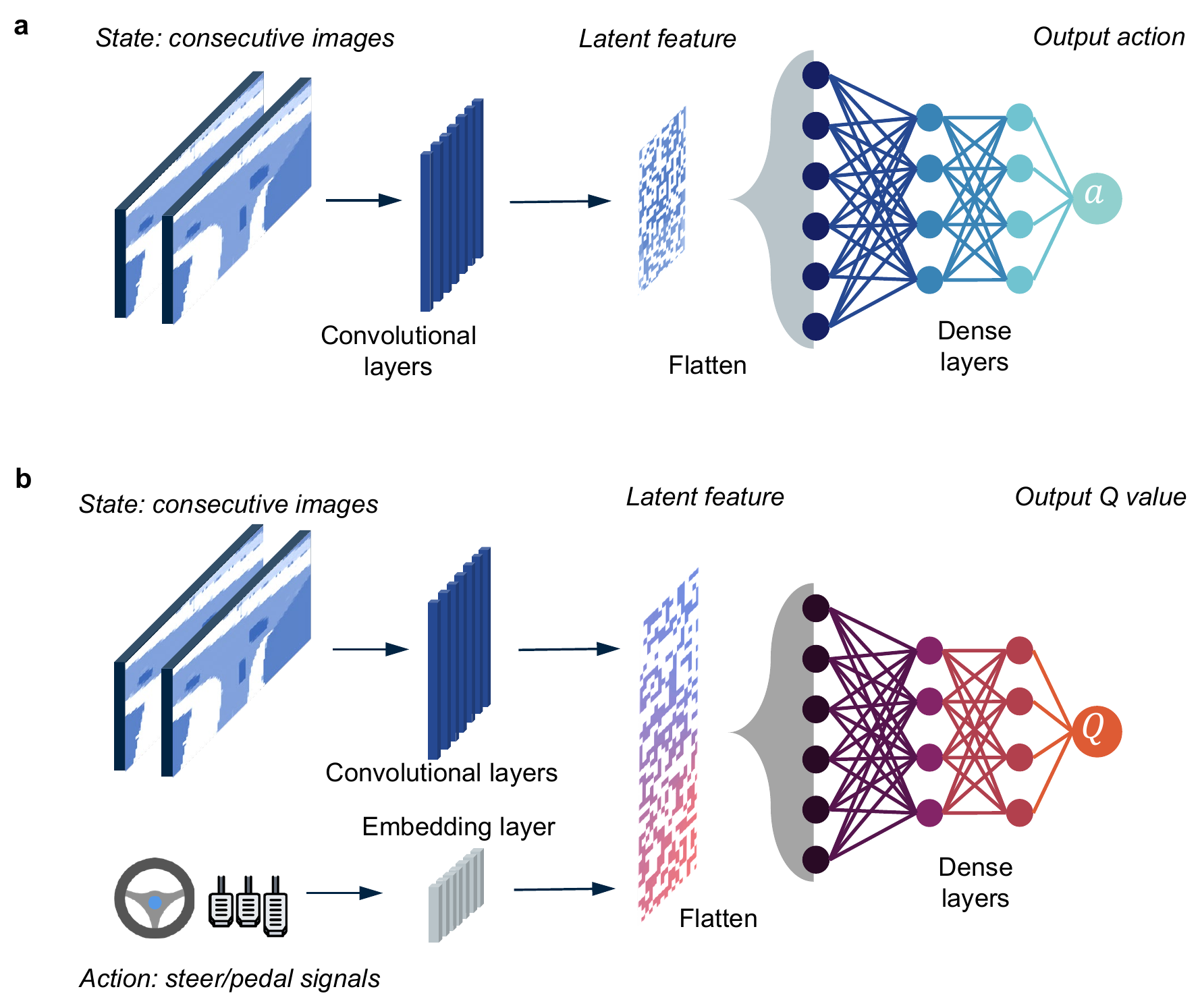}
  \end{center}
    \caption[Neural network approximator illustration.]{Neural network approximator illustration. a, the policy function architecture achieved by the neural network, where the target value network owning the same structure is omitted for brevity. b, the value network architecture achieved by the neural network, where the target value network owning the same structure is omitted for brevity.  }\label{Figure5}
\end{figure}

\section{Experimental Validation}\label{section7}
\subsection{Baseline Algorithms}\label{section7.1}

We employ state-of-the-art in the domain of human-involved RL algorithms as baselines and compare their performance against the proposed algorithm. 

\subsubsection*{IA-TD3}
This baseline is derived from Intervention Aided Reinforcement Learning (IARL) , which is a representative combination of a continuous-action RL algorithm and human demonstration. The RL’s policy network is modified to adapt to human demonstrated actions by introducing the behavior cloning objective. Once human intervention happens, the human demonstration will substitute the RL’s exploratory action, and a penalty signal will impose on the reward value. In this study, we devise a modified IARL by replacing the on-policy base algorithm with TD3, which essentially augmented the algorithm by improving the sample efficiency. We also implement the prioritized experience replay (PER) in this baseline for a fair comparison. 

\subsubsection*{HI-TD3}
This baseline is derived from Human Intervention Reinforcement Learning (HIRL) , which is a combination of a discrete-action RL algorithm and human demonstration. Once intervention happens, the human demonstration will substitute the RL’s exploratory action, and a penalty signal will take on the reward signal. In this study, we devise a modified HIRL by replacing the discrete-action base algorithm with TD3, which augmented the algorithm by improving the representation and control precision. We also implement the PER in this baseline for a fair comparison.

\subsubsection*{RD2-TD3}
This baseline is derived from Recurrent Replay Distributed Demonstration-based DQN (R2D3), which is a representative combination of PER mechanism and human demonstration. In this study, we devise a modified algorithm by replacing DQN with TD3. The original R2D3 utilizes the recurrent neural network to augment performance, which is not the concerned technique in the context of this report, thus, we remove the recurrent network structure and only focus on its replay distributed character regarding human demonstrations. Thus, we devise a Replay Distributed Demonstration-based (RD2) TD3 algorithm, which distributes human demonstration and RL exploratory experience into two experience buffers respectively and retrieves experiences by PER. The probability of utilizing human guidance instead of RL exploratory experience is aligned with the ratio of human guidance amount and total data amount.

Furthermore, we use the vanilla PER+TD3 that is shielded from human guidance as an ablated baseline.

\subsection{Experimental Setting}\label{section7.2}

Multiple experiments are to evaluate the comprehensive performance of PHIL-TD3 against baselines. First, the training efforts of involved algorithms are comparatively evaluated in the two autonomous driving scenarios. Then the well-trained autonomous driving strategies are tested regarding control performance with several metrics. Last, a series of experiments involving both training and testing stages are conducted to analyze the mechanism of PHIL-TD3.

\begin{figure}[hb]
\begin{center}
\noindent
  \includegraphics[width=\linewidth]{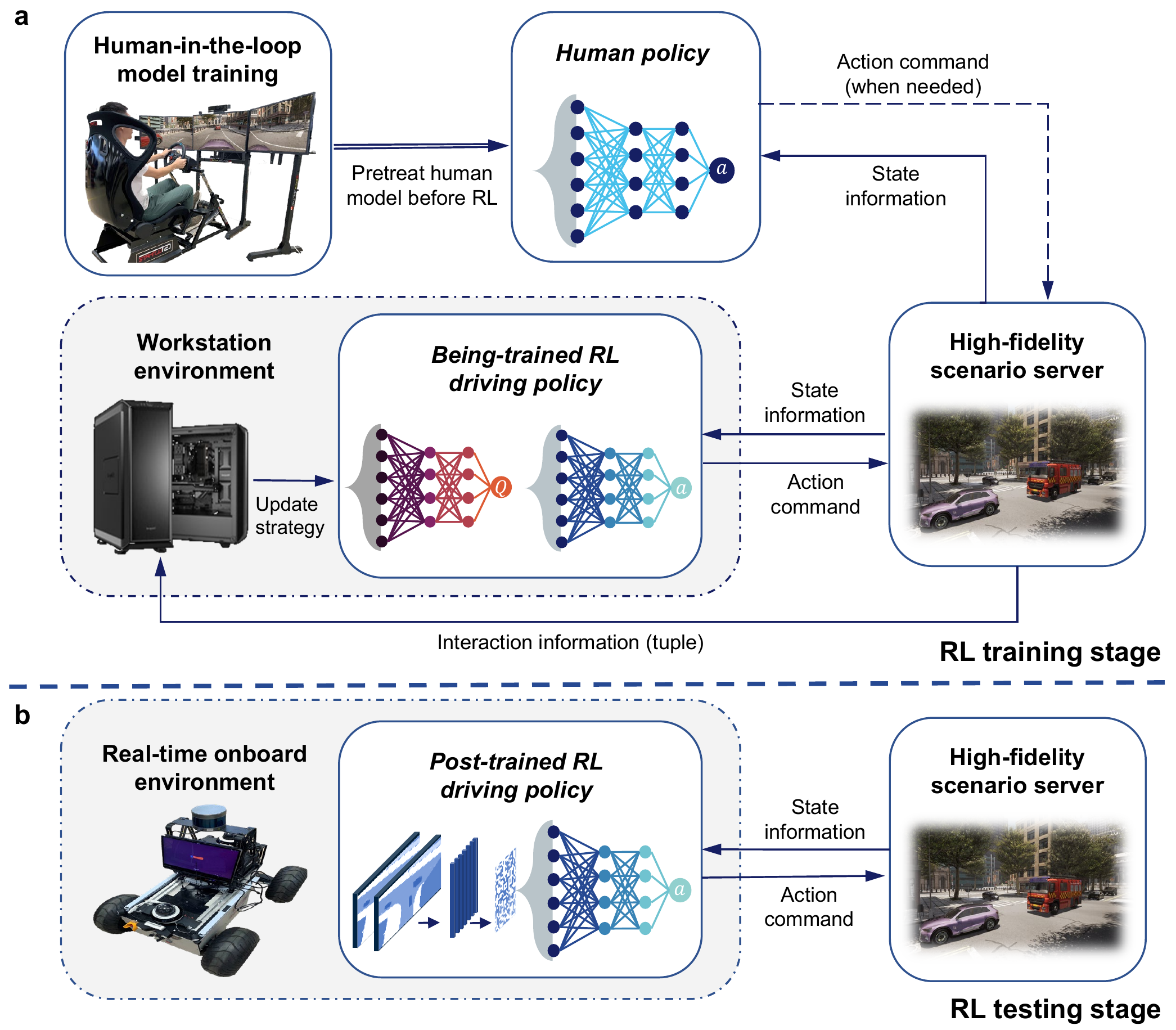}
  \end{center}
    \caption[Experimental workflow]{Experimental workflow. a, the experimental workflow in the RL training stage. The dotted line represents the human policy model that is not always sending commands. b, the experimental workflow in the RL testing stage.  }\label{Figure6}
\end{figure}

The training hardware comprises a driving simulator and a high-performance workstation. The driving simulator is utilized to collect human data to train the human policy model complying with Section IV, and the workstation is dedicated to processing RL training. A high-fidelity autonomous driving simulation platform, CARLA \cite{RN29}, is employed to implement driving scenarios and generate RL-environment interaction information. The schematic diagram of the RL training stage is illustrated in Fig. \ref{Figure6}(a).

The testing hardware is a robotic vehicle. The post-trained RL policy is implemented on the computation platform of the vehicle, which can communicate with the CARLA server through the wireless network. The on-board RL policy receives state information from CARLA and sends its control command back to remotely complete autonomous driving tasks. The robotic vehicle aims to test whether the RL policy is well-worked under the current onboard calculation and communication situations. The schematic diagram of the RL testing stage is demonstrated in Fig. \ref{Figure6}(b).

The detailed configuration of the above experimental platform is provided in table \ref{tableA1}. The algorithms are concreted based on neural networks, of which the architecture is illustrated in Appendix \ref{appendix}. And the hyperparameters of the algorithms are also given in Appendix \ref{appendix}.

\subsection{Evaluation of RL Training Performance}\label{section7.3}

In this section, we explore whether human guidance can indeed improve the RL training, and further, which algorithm can achieve the best learning performance given the same human guidance. Additionally, we also investigate the effects of human guidance in dealing with RL tasks of different difficulties. 

To eliminate the deviation brought by participant randomness and obtain repeatable results, we use the identical human model (see Section \ref{section5}) to mimic human guidance behaviors in RL training processes. We fixate the sequence of random seeds and make the triggering conditions of human interventions invariant in all training attempts, which achieves a fair comparison across different algorithms. Two metrics are employed: the average reward of the training episode (excluding intervention-based shaping term), and the surviving distance of the ego vehicle in the training episode before a goal state or failure state in Eq. \ref{eq31} occurs. A higher value of both metrics indicates a better learning performance.

\begin{figure}[tb]
\begin{center}
\noindent
  \includegraphics[width=\linewidth]{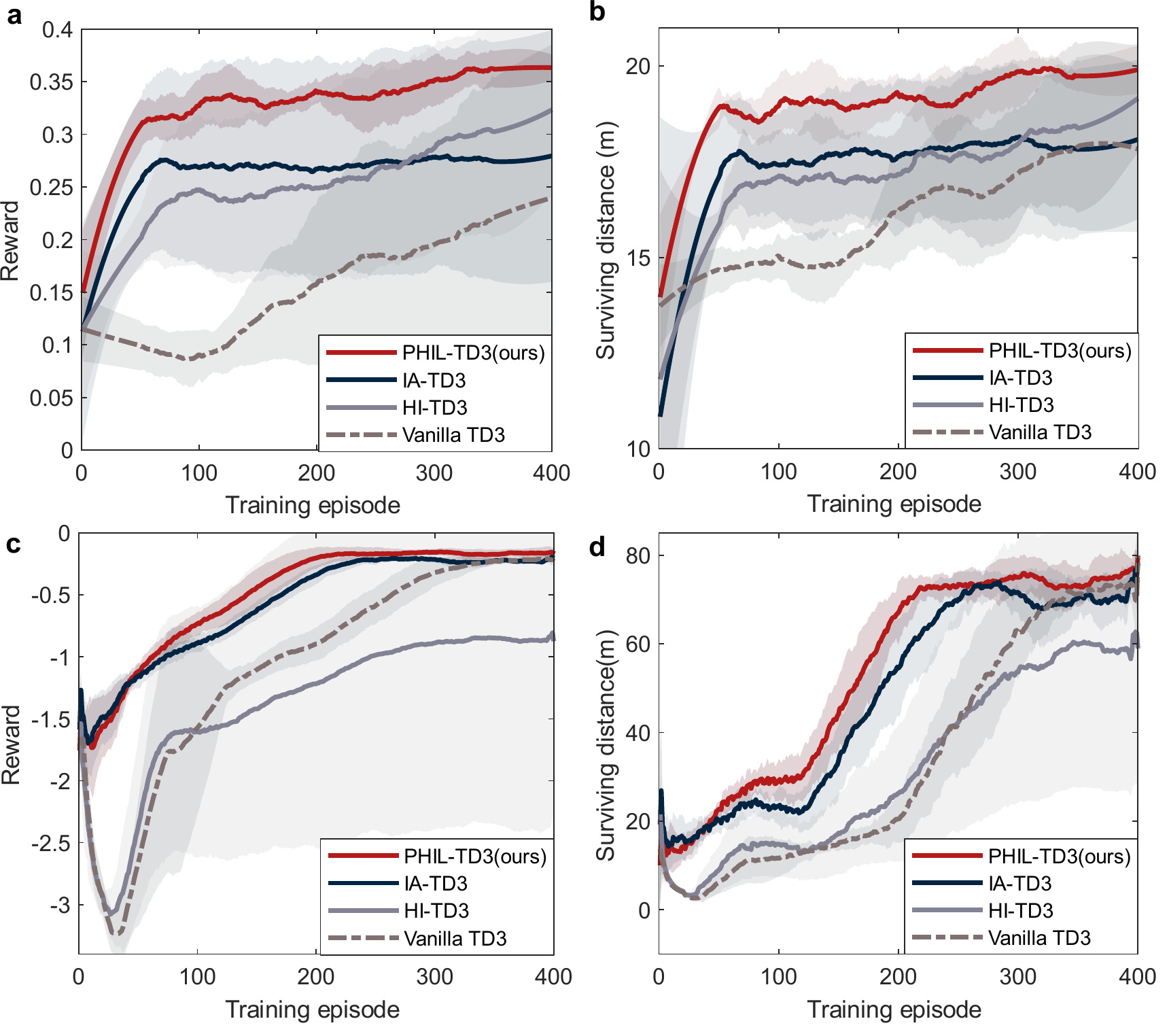}
  \end{center}
    \caption[Learning efforts of different RL algorithms.]{Learning efforts of different RL algorithms. a-b, curves of training rewards and surviving distances in the left-turn scenario, respectively. c-d, curves of training rewards and surviving distances in the congestion scenario, respectively. }\label{Figure7}
\end{figure}

\begin{figure}[hb]
\begin{center}
\noindent
  \includegraphics[width=\linewidth]{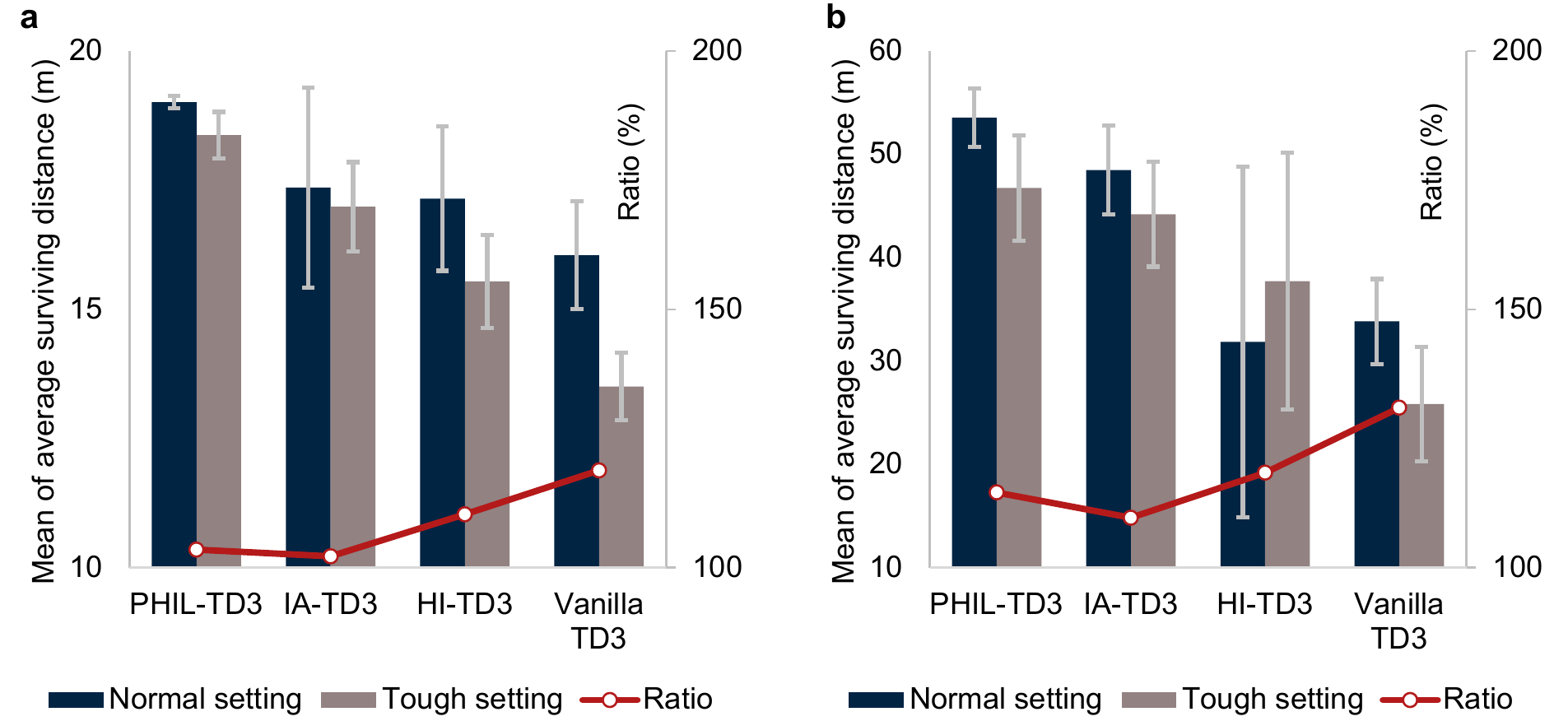}
  \end{center}
    \caption[Learning efforts of different RL algorithms in different task difficulties.]{Learning efforts of different RL algorithms in different task difficulties. a-b, curves of training rewards and surviving distances in the left-turn scenario, respectively. c-d, curves of training rewards and surviving distances in the congestion scenario, respectively. }\label{Figure8}
\end{figure}

Fig. \ref{Figure7} visualizes the learning performance through curves, represented with a solid line of the mean value and an error band of the standard deviation. We run each algorithm five times in the unprotected left-turn scenario and demonstrate their learning processes in Figs. \ref{Figure7}(a-b). The vanilla TD3 is struggling to improve its policy, while the other three algorithms achieve higher rewards and survive distances in a much shorter time, which indicates the effectiveness of human guidance. Among the human-involved algorithms, HI-TD3 performs the slowest learning process suggested by either reward or surviving distance, and IA-TD3 exhibits a faster convergence but with limited asymptotic performance. In opposite, PHIL rapidly seizes the opportunity of human guidance and learns the best asymptotic policy. It is noticeable that PHIL-TD3 achieves the best asymptotic average reward of the baselines in less than 50 episodes, improving the learning efficiency by over 700\%. We also run the congestion scenario five times for each algorithm and plot the learning curves in Figs. \ref{Figure7}(c-d). The comparable PHIL and IA-TD3 perform better than the other two baselines when considering the reward. While the metric of surviving distance further confirms this advantage and profitably differentiates the algorithm abilities. Specifically, PHIL wins the highest eventual score. IA-TD3 and HI-TD3 manifest comparable levels of asymptotic performance while IA-TD3 has an advantage in learning efficiency. In this scenario, PHIL-TD3 achieves the best asymptotic average surviving distance of the baselines in 220 episodes, improving the learning efficiency by over 120\%. Overall, the results in this training session highlight the significant superiority of the proposed algorithm in learning performance.

We further explore the learning performance of RLs with different task difficulties, which gives rise to Fig. \ref{Figure8}. The normal setting complies with the problem definition in Section \ref{section6}, which is adopted throughout the report, while the tough setting changes consecutive-frame input of Eq. \ref{eq28} into a single frame input, impairing the temporal perception ability of RL agents. At the high level, the statistical results of the normal setting are aligned with the trends of Figs. \ref{Figure7}(a-d). And it is indicated that the tough setting does not change the performance ranking of algorithms despite the degradations in different degrees. At the detail level, the performance difference between the normal and the tough settings, i.e., the ratios in Fig. \ref{Figure8}, can manifest more algorithmic characteristics. Specifically, PHIL-TD3 and IA-TD3, which own the behavior-cloning objective, are less affected by the incomplete problem definition of the tough setting, whereas HI-TD3, and vanilla TD3, which less or not rely on human guidance, are significantly degraded in the same condition. Despite the single-frame state in the autonomous driving task is not fairly reasonable, the findings through this comparison are useful. Since numerous complex real-world tasks are intractable to be well-defined or are only partially observable, the strong integration of human guidance into RL, e.g., behavior-cloning, can play a more remarkable role than pure RL algorithms.

Then, we investigate the contributions of different components in improving the performance of the proposed PHIL algorithm. The results are provided in Appendix 1. Three components, the behavior cloning objective of Eq. \ref{eq12} of Section \ref{section4.2}, the proposed prioritized experience replay mechanism of Section \ref{section4.3}, and the intervention-based reward shaping mechanism of Section \ref{section4.4}, are validated to be effective, respectively. The results show that the proposed prioritized human-demonstration replay mechanism plays a crucial role in improving the ultimate performance. 

Last, we evaluate the computational efficiency. The CPU clock time of different algorithms is compared in table \ref{tableA8}. It is shown that the training time consumption of the proposed algorithm is similar to that of IA-TD3. This is because the proposed priority calculation scheme consumes very few computational resources. In all, the proposed PHIL-TD3 greatly improves the training efficiency and performance without requiring significantly higher computational resources.

\begin{table}[tb]
\caption{Comparison of time efficiency of different RL algorithms}
\begin{center}
\begin{tabular}{ cc } 
 \hline
 Algorithm & Time consumption (s) per 10000 steps \\ 
 \hline
 PHIL-TD3 & 360.70\\
 IA-TD3 & 348.71 \\ 
 HI-TD3 & 328.93 \\
 Vanilla-TD3 & 329.39\\
 \hline
\label{tableA8}
\end{tabular}
\end{center}
\end{table}

\subsection{Evaluation of Testing Performance of Driving Strategies}\label{section7.4}

In this section, the post-trained driving strategies are tested in terms of autonomous driving performance, adaptiveness, and robustness, which can further evaluate the practicality of the above algorithms.

\begin{figure}[tb]
\begin{center}
\noindent
  \includegraphics[width=\linewidth]{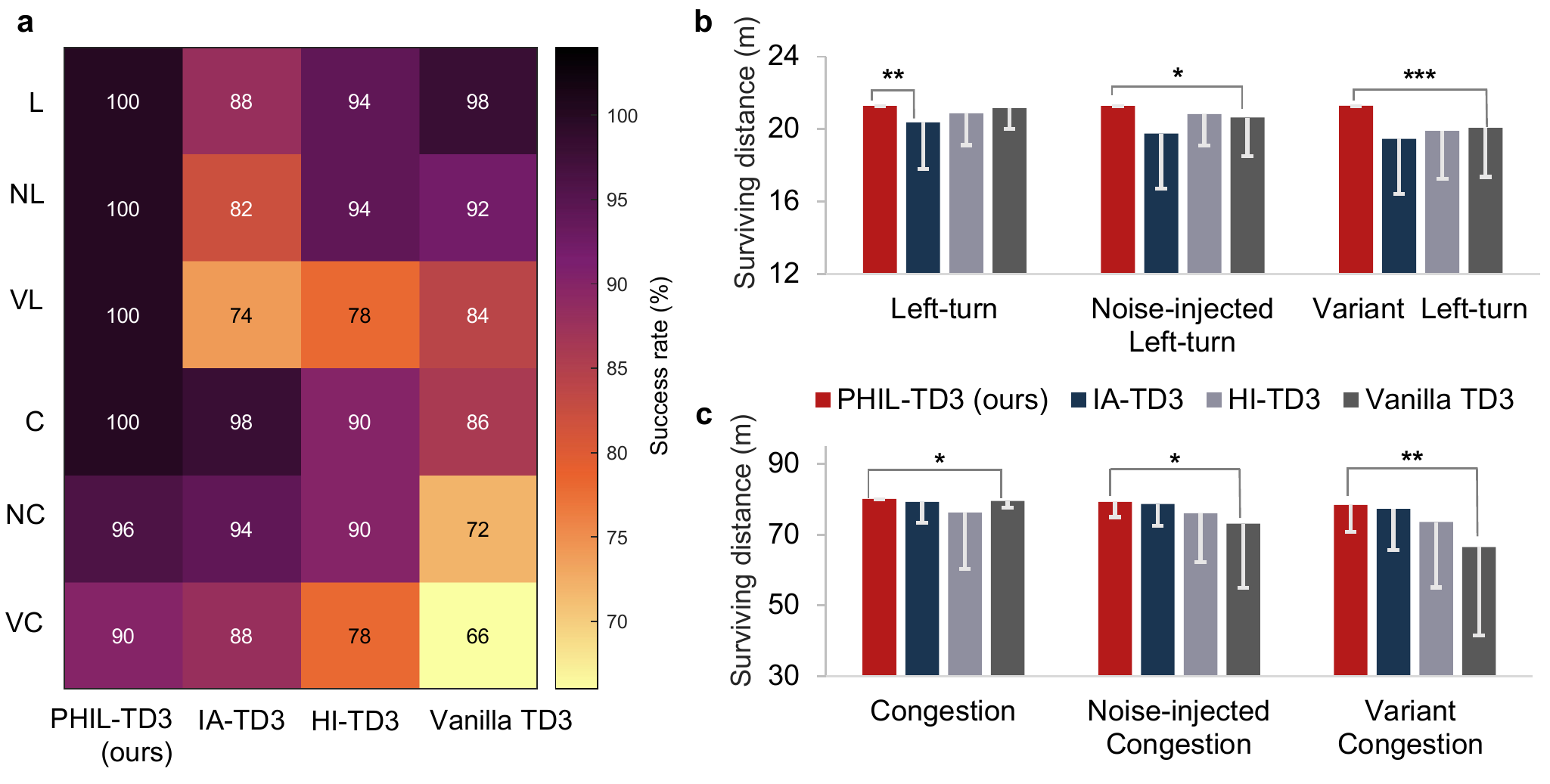}
  \end{center}
    \caption[High-level driving performance of different RL strategies under six autonomous driving scenarios.]{High-level driving performance of different RL strategies under six autonomous driving scenarios. The two noise-injected scenarios and two variant scenarios are different with the two training scenarios, which can examine the robustness and adaptiveness, respectively. “C” and “L” refer to congestion scene and left-turn scene, respectively, while “N” and “V” denote noise-injected and variant scene, respectively. a, the heatmap of success rate. b, the barplot of surviving distance in the left-turn scenarios. The theoretical maximum surviving distance of the scenario is 21 meters. The error bar describes the standard deviation. c, the barplot of surviving distance in the congestion scenarios. The theoretical maximum surviving distance of the scenario is 80 meters. The error bar describes the standard deviation. The paired t-test is adopted for the statistical test. }\label{Figure9}
\end{figure}

The zero-mean Gaussian noises, of which the standard deviation is 5\% of the whole control domain, are injected to output commands of the driving strategies to test the robustness. More types and amounts of surrounding vehicles are added to construct variant scenarios to test the adaptiveness. We conduct 50 runs with the same sequence of random seeds for each post-trained strategy in each scenario. The success rate, which is defined as the number of completed runs divided by the total attempts in the same scenario, is taken as the metric for evaluating the safety performance in Fig. \ref{Figure9}(a). Our PHIL-TD3 achieves the highest success rate in all scenarios, showing its superior task-completeness abilities. The vanilla TD3, albeit with its unstable training performance, performs competitively like IA-TD3 and HI-TD3 in the testing stage. Considering the two trained scenes (rows 1, 4) and noise-injected scenes (rows 2, 5), three baseline strategies behave acceptably, nevertheless, the scenario variants (rows 3, 6) significantly degrade their safety. Our PHIL, instead, maintains the highest ability regardless of varying testing conditions, manifesting itself with good robustness and adaptiveness. In Figs. \ref{Figure9}(b-c), PHIL-TD3 once again shows its superiority in safety by the highest average surviving distance, and importantly, its performance stability is confirmed due to the lowest variance. 

Fig. \ref{Figure10} can further evaluate the detailed performance of driving strategies. Time consumption of the episode is the secondary target of RL optimization in the left-turn task, which is implied in the reward function of Eq. \ref{eq31}; thus, the related boxplot is illustrated in Fig. \ref{Figure10}(a) to access this objective. It is found that the proposed strategy enjoys minimal time consumption, which is significantly different from other candidates. In congestion tasks, smoothness is the secondary target of the reward function of Eq. \ref{eq32}; thus, we choose the lateral acceleration as the smoothness measure and provide the associated boxplot in Fig. \ref{Figure10}(b). The comparable human-involved strategies show their superior smoothness to vanilla TD3 in the training and noise-injected scenes, while the variant congestion scenario profitably validates the advantage of PHIL-TD3. 

\begin{figure}[tb]
\begin{center}
\noindent
  \includegraphics[width=\linewidth]{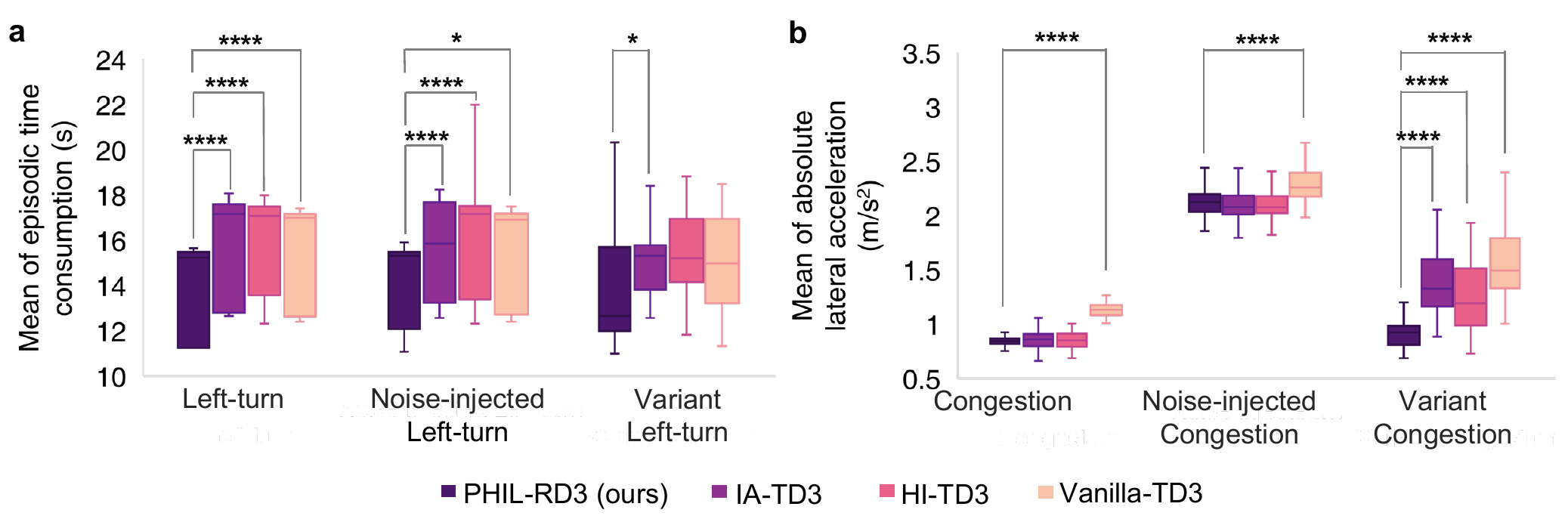}
  \end{center}
    \caption[Low-level driving performance of different RL strategies under six autonomous driving scenarios.]{Low-level driving performance of different RL strategies under six autonomous driving scenarios.a, the boxplot of time consumption of the episode without failure in the left-turn scenarios. b, the boxplot of average lateral acceleration of the episode in the congestion scenarios. The paired t-test is adopted for the statistical test. }\label{Figure10}
\end{figure}

Additionally, we compare the performance of three human guidance-based RL algorithms to the human guidance itself. Specifically, the surviving distance of these human-involved RLs are compared with the human policy model, and the results are provided in Tables \ref{table1}-\ref{table2}, where the results (mean and standard deviation) are calculated by 50 evaluation seeds. The results suggest the superiority of the proposed PHIL-TD3 over the human policy model.

Overall, our PHIL-TD3 perpetuates its predominance of training performance and takes the top spot in the testing stage.

\begin{table*}[tb]
\caption{Comparison of the surviving distance of human-related driving strategies in three left-turn scenarios.}
\begin{center}
\begin{tabular}{ cccc } 
 \hline
 Surviving distance, meter, \(\uparrow\) & Left-turn & Noise-injected Left-turn & Variant Left-turn \\ 
 \hline
 PHIL-TD3 (ours) & \textbf{21.28\(\pm\)0.02} & \textbf{21.27\(\pm\)0.02} & \textbf{21.29\(\pm\)0.02} \\ 
 IA-TD3 & 20.37\(\pm\)2.58 & 19.75\(\pm\)3.05 & 19.46\(\pm\)3.03\\ 
 HI-TD3 & 20.87\(\pm\)1.76 & 20.63\(\pm\)1.74 & 19.90\(\pm\)2.65 \\
 Human policy model & 20.70\(\pm\)1.82 & 20.88\(\pm\)1.32 & 20.90\(\pm\)1.21 \\
 \hline
\label{table1}
\end{tabular}
\end{center}
\end{table*}

\begin{table*}[tb]
\caption{Comparison of the surviving distance of human-related driving strategies in three congestion scenarios.}
\begin{center}
\begin{tabular}{ cccc } 
 \hline
 Surviving distance, meter, \(\uparrow\) & Congestion & Noise-injected Congestion & Variant Congestion \\ 
 \hline
 PHIL-TD3 (ours) & \textbf{80.15\(\pm\)0.08} & \textbf{79.26\(\pm\)4.30} & \textbf{77.29\(\pm\)11.55}\\ 
 IA-TD3 & 79.26\(\pm\)5.90 & 78.64\(\pm\)6.11 & 78.39\(\pm\)7.66 \\ 
 HI-TD3 & 76.27\(\pm\)16.00 & 76.02\(\pm\)13.72 & 73.57\(\pm\)18.49 \\
 Human policy model & 80.11\(\pm\)0.07 & 77.66\(\pm\)12.27 & 75.15\(\pm\)15.20\\
 \hline
\label{table2}
\end{tabular}
\end{center}
\end{table*}

\subsection{Discussion on Prioritized Human Experience Mechanism}\label{section7.5}

In this section, we explore the effect of PHIL-RL from three aspects: the performance improvement by the \(TDQA\) mechanism, the merit of the single-buffer experience replay structure, and the algorithmic robustness to bad demonstrations.

\(TDQA\), as the crucial innovation of PHIL-TD3, can improve learning performance in the context of human guidance-based RLs, as suggested in Sections \ref{section7.3} and \ref{section7.4}. More specifically, it establishes a novel priority indicator to deal with various human guidance. Thus, we first evaluate \(TDQA\) by comparing different priority schemes. “\(Q\)-adv” represents the scheme in which the priority of human guidance is calculated based only on \(Q\)-advantage. “\(TD\)”, i.e., temporal difference, the scheme is inherited from the original PER method, but the \(TD\) weights of human demonstrations in it are doubled to highlight the human guidance in the replay buffer. 

Five learning attempts are conducted with the same sequence of random seeds for each candidate, and the corresponding learning curves are in Fig. \ref{Figure11}. We find scheme comparison in two training scenarios shows similar trends when observing results in Figs. \ref{Figure11}(a-b) and Figs. \ref{Figure11}(c-d). The pure \(TD\) scheme learns faster than pure \(Q\)-advantage in both scenarios, yet its asymptotic scores (both reward and surviving distance) are significantly lower than those of the \(Q\)-advantage scheme. To be more specific, we evaluate different weigh of “\(TD\)” and “\(Q\)-adv” and provide the learning performance in Fig. \ref{Figure12}. Under the same \(TD\), \(Q\)-advantage is weighted with three importance levels. In particular, the equal weighting scheme, i.e., \(w=1e0\), is the adopted default scheme in the report, whereas the other two variants are for comparison. It is shown that a larger \(TD\) (\(w=1e-1\)) makes faster convergence but can lead to unfavorable asymptotic performance, while a larger \(Q\)-advantage (\(w=1e2\)) can achieve the same-level performance as the default setting, despite sometimes slower learning process. The above results, reveal the same performance trends as Fig. \ref{Figure11}. That is, \(TD\) error accelerates the convergence speed and \(Q\)-advantage contributes to improving convergence performance.

\begin{figure}[tb]
\begin{center}
\noindent
  \includegraphics[width=\linewidth]{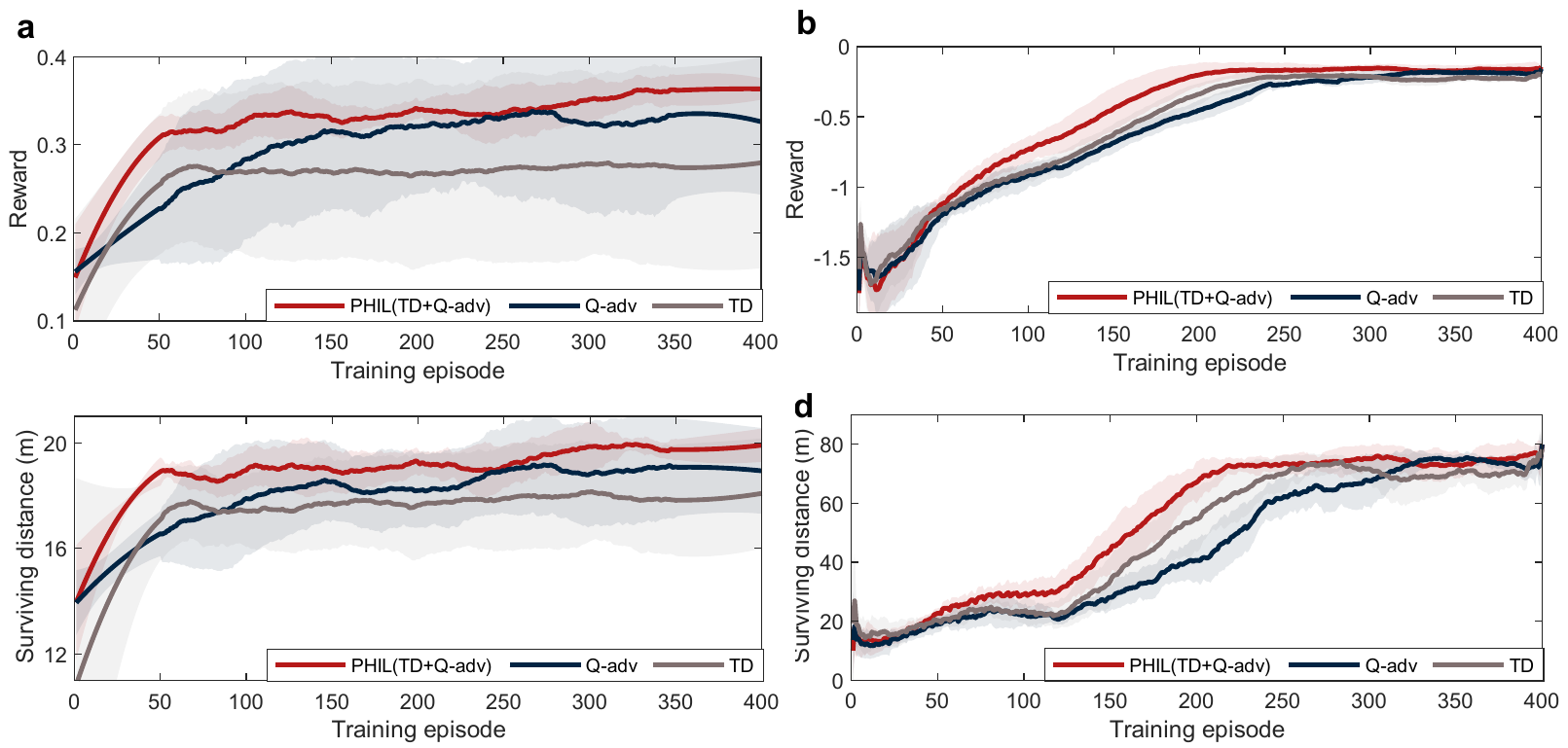}
  \end{center}
    \caption[Learning efforts of experience replay mechanisms with different priority indicators.]{Learning efforts of experience replay mechanisms with different priority indicators.. a-b, training rewards in left-turn and congestion scenario, respectively. c-d, surviving distances in left-turn and congestion scenario, respectively.  }\label{Figure11}
\end{figure}

\begin{figure}[tb]
\begin{center}
\noindent
  \includegraphics[width=\linewidth]{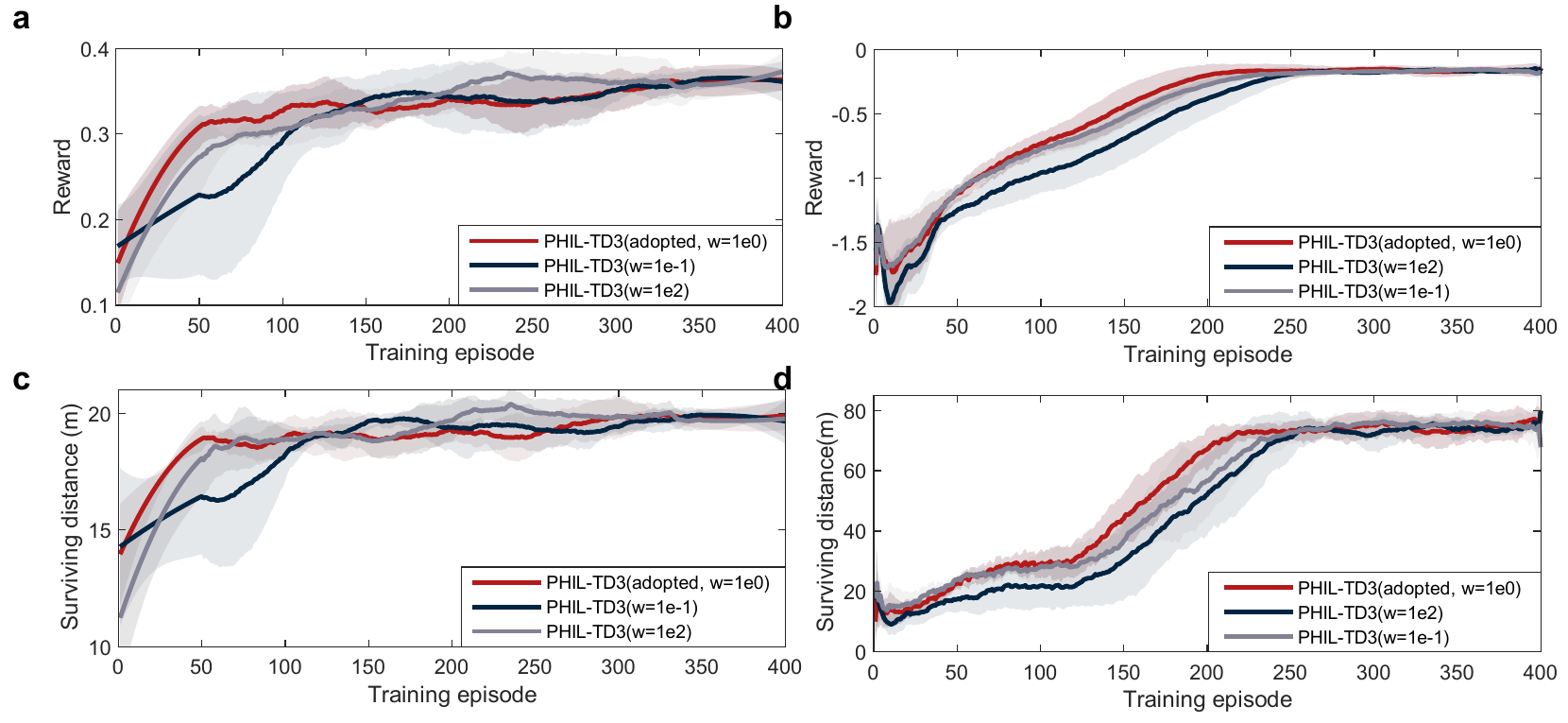}
  \end{center}
    \caption[Learning efforts of experience replay mechanisms with different weighting schemes.]{Learning efforts of experience replay mechanisms with different weighting schemes. a-b, training rewards in left-turn and congestion scenario, respectively. c-d, surviving distances in left-turn and congestion scenario, respectively.  }\label{Figure12}
\end{figure}

Essentially, these two schemes score human guidance based on different indicators, and a better indicator can provide RL with more high-quality guidance to improve learning efficiency.  Thus, we find \(TD\) indicator, as proved in conventional PER, is indeed beneficial to rapidly improve performance, nonetheless, the \(Q\)-advantage indicator is superior to the \(TD\) indicator in the later stage of the training process. The delayed superiority of \(Q\)-advantage complies with intuition since unlike the direct indicator as \(TD\), the evaluation ability of the \(Q\) network, i.e., the source of \(Q\)-advantage, also needs to be trained. The proposed PHIL, which smartly combines both indicators, achieves the most favorable performance in the two scenarios, showing the effectiveness of the \(TDQA\) mechanism.

\begin{figure}[tb]
\begin{center}
\noindent
  \includegraphics[width=\linewidth]{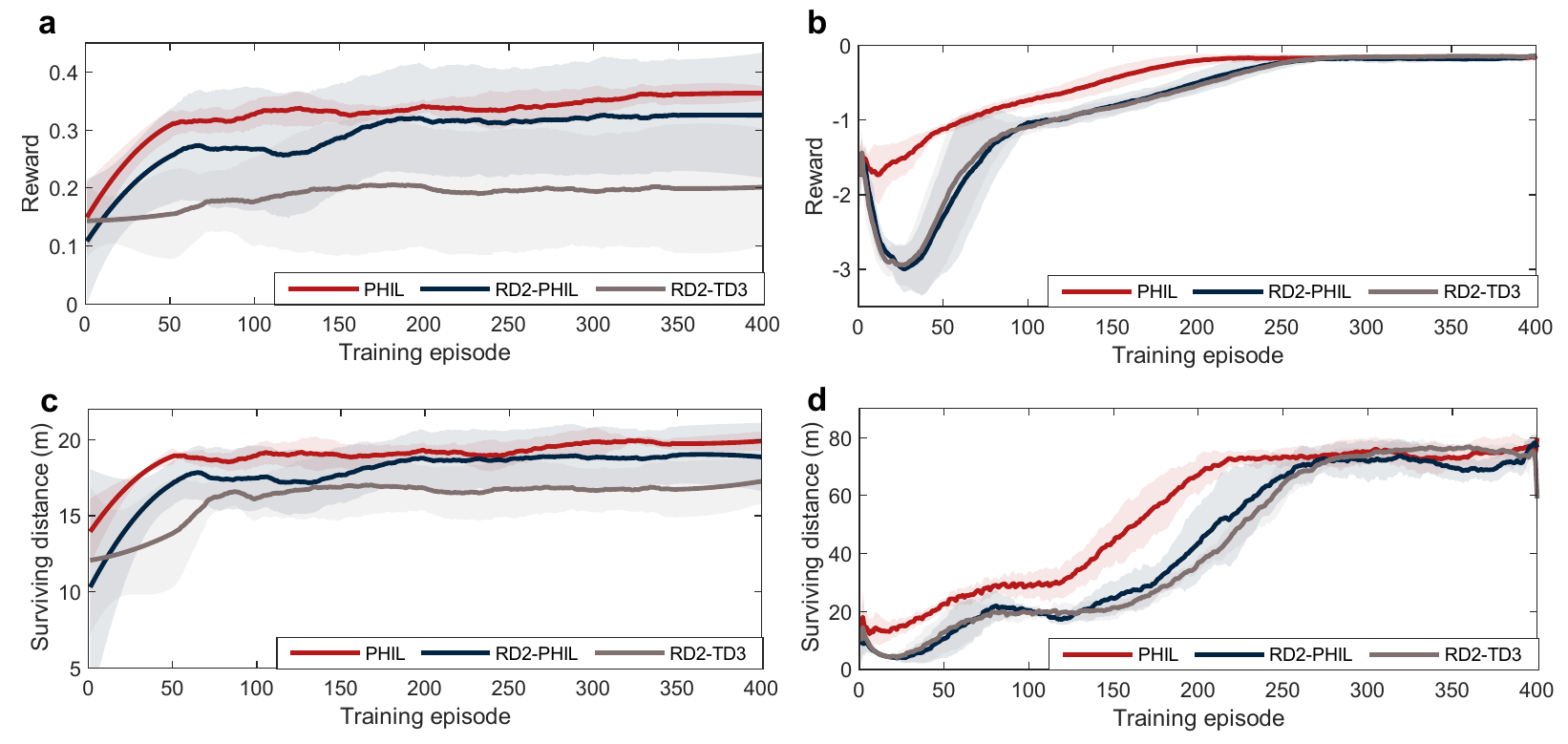}
  \end{center}
    \caption[Learning efforts of experience replay mechanism with different replay schemes]{Learning efforts of experience replay mechanism with different buffer structure. a-b, the training rewards of algorithms with different experience replay structures in the left-turn scenario and congestion, respectively. c-d, the training surviving distances of algorithms with different experience replay structures in the left-turn and congestion scenario, respectively.  }\label{Figure13}
\end{figure}

PHIL puts the human guidance and exploratory experience of RL into the same experience replay buffer. This structure differs from the double distributed scheme which is represented by R2D3. To evaluate the performance of these two schemes under the devised autonomous driving tasks, RD2-TD3 is developed which utilizes \(TD\) as the indicator to respectively retrieve data from two buffers. Additionally, the \(TDQA\) priority mechanism is ported to the RD2-TD3 setting forming the other variant, RD2-PHIL. Five learning attempts with the same sequence of random seeds are conducted by RD2-TD3 and RD2-PHIL. Through learning curves in Figs. \ref{Figure13}(a-d), it is found that the double distributed buffer scheme, i.e., RD2-PHIL, fails to achieve the same level of learning efficiency as the proposed PHIL. A possible reason behind this is that human guidance can only be utilized in a chunk way under the double-buffer setting, whereas the single buffer scheme of PHIL is more flexible and friendly to small-scale human guidance data. The conventional RD2-TD3 is least favorable, which is within expectation due to the lack of the \(TDQA\) mechanism. To sum up, the results in Fig. \ref{Figure13} support the single-buffer structure utilized in the PHIL-TD3, and profitably suggest the effectiveness of the proposed TDQA mechanism.

A general situation occurs that human guidance is not perfect, and thus an unqualified human participant can sometimes conduct actions that are harmful to the task. We test if the unfavorable guidance of the unqualified human would impair the learning process, that is, evaluating the robustness to harmful guidance. It is noticeable that the robustness discussed here is distinguished from that in Section \ref{section7.4}: we discuss how the algorithms are affected by poor guidance instead of the anti-noise ability of post-trained driving strategies. The human intervention condition of the training stage keeps the same as foregoing experiments, while one-third of the demonstrations from the human model are replaced with random actions to simulate non-proficient human behaviors.

\begin{figure}[tb]
\begin{center}
\noindent
  \includegraphics[width=\linewidth]{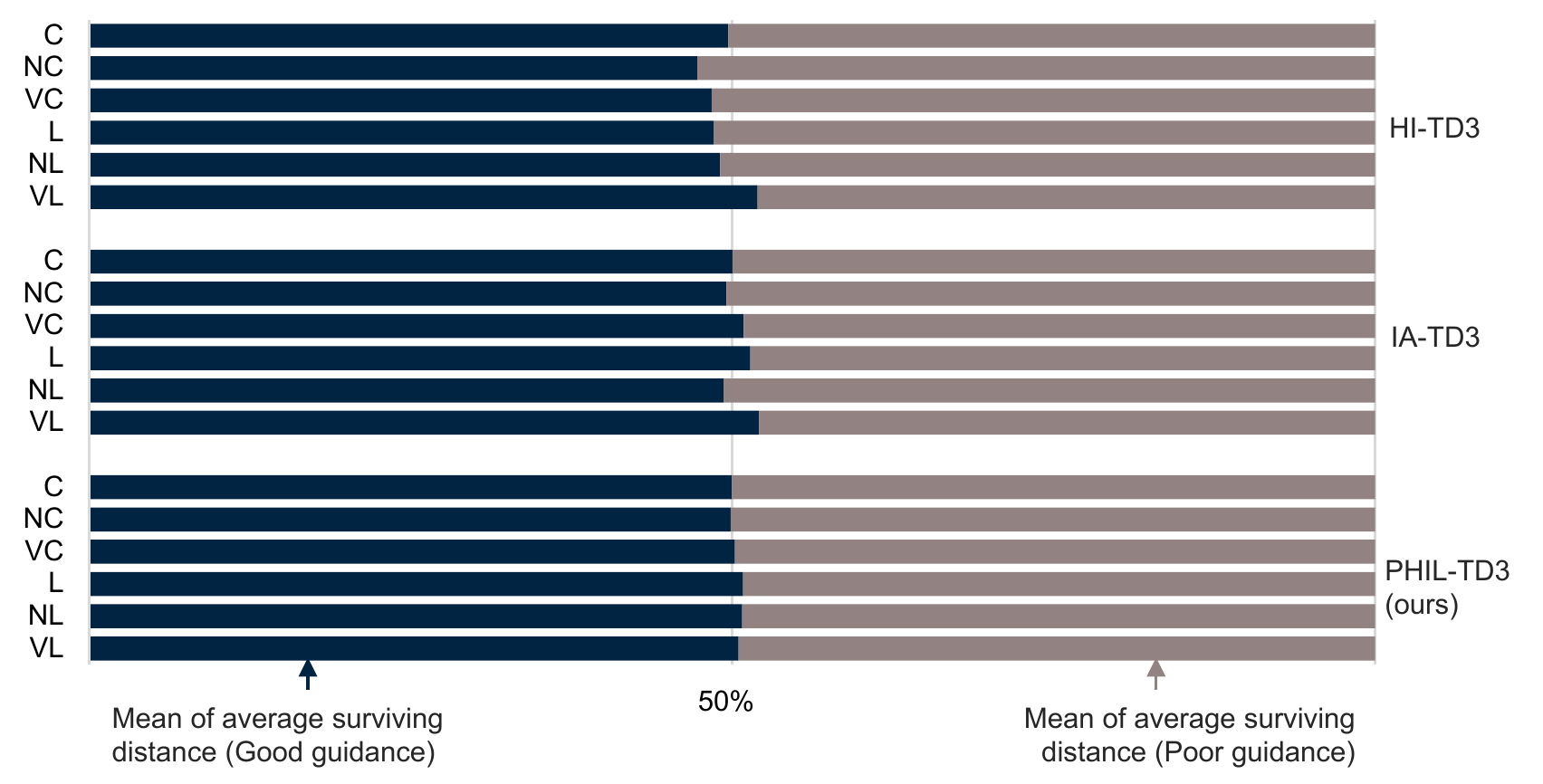}
  \end{center}
    \caption[Robustness comparison of different human-guided RL strategies with respect to the quality of human guidance data: surviving distance.]{Stacked barplot of the surviving distance of different human-guided RL strategies under good/poor guidance in all scenarios. }\label{Figure14}
\end{figure}

Post-trained driving strategies under poor guidance are tested to conduct 50 runs in each scenario and are compared with those under the good guidance of Fig. \ref{Figure9}. The stacked barplots in Fig. \ref{Figure14} provide the adversarial testing performance of three human-guidance-based RL algorithms under good and poor guidance. We take the average surviving distance as the metric and the less performance deterioration by poor guidance suggests better robustness. Our PHIL-TD3 exhibits good performance since a nearly 50:50 situation occurs in all six scenarios. IA-TD3 falls behind with a 2.1\% degradation on average in poor guidance context, while HI-TD3 is even improved by an average of 3.6\% extent given poor guidance. Intuitively, poor guidance would remarkably degrade PHIL and IA-TD3 since they utilize the behavior-cloning objective to learn from human guidance, while HI-TD3, which only substitutes partial RL explorations with human guidance, can be less affected. The not-degraded HI-TD3 and most-degraded IA-TD3 support the above idea. Our PHIL defeating IA-TD3 is attributed to the \(TDQA\) mechanism: \(Q\)-advantage well access the quality of human demonstrations and feed more high-quality demonstrations to the RL agent; accordingly, the agent learns greater from good guidance than negative guidance. The secondary optimization target of RL, i.e., driving smoothness, is evaluated in Fig. 15 by acceleration distribution. The proposed PHIL-TD3 wins all scenarios by the most favorable smoothness which further confirms the abovementioned superiority. 

Overall, the \(TDQA\) mechanism, as the core innovation of the PHIL-RL algorithm, contributes to the preponderant learning performance through its unique discriminatory power on the quality of human guidance. It also improves the robustness to poor guidance, which can relieve the requirement on the qualification of human guidance. Additionally, the single buffer setting is more favorable than the double distributed buffer scheme under autonomous driving tasks of this report.

\begin{figure}[tb]
\begin{center}
\noindent
  \includegraphics[width=\linewidth]{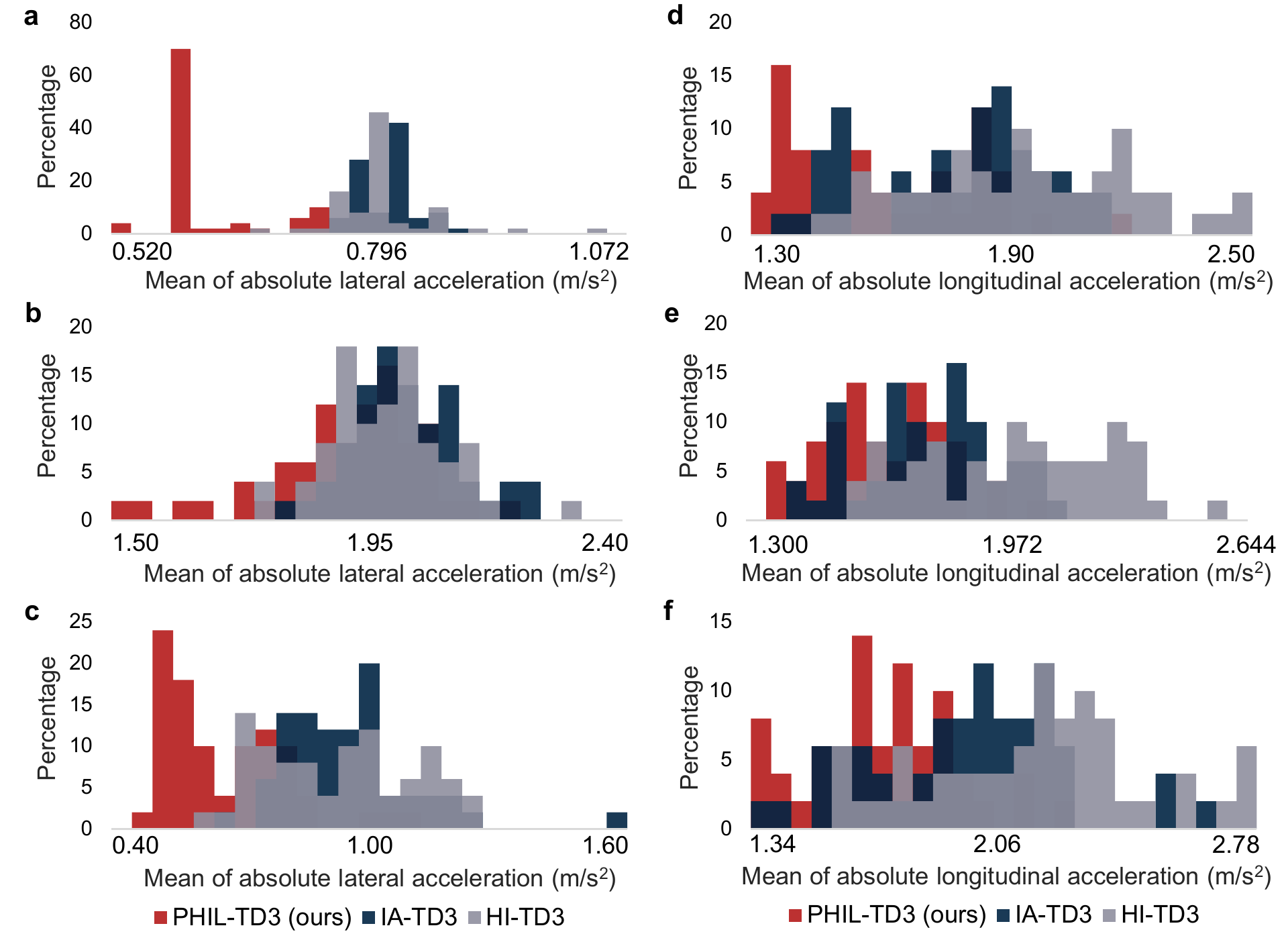}
  \end{center}
    \caption[Robustness comparison of different human-guided RL strategies with respect to the quality of human guidance data: acceleration.]{Acceleration distributions of different human-guided RL strategies. a-c, the frequency distribution plot of the average absolute value of the longitudinal acceleration in the left-turn scene, noise-injected left-turn scene, and variant left-turn scene, respectively. The smaller acceleration indicates a better driving smoothness. d-f, t  he frequency distribution plot of the average absolute value of the lateral acceleration in the congestion scene, noise-injected congestion scene, and variant congestion scene, respectively. The smaller acceleration indicates a better driving smoothness. }\label{Figure15}
\end{figure}

\section{Conclusions}\label{section8}
In this paper, we establish a human-guidance-based reinforcement learning framework and propose a novel experience utilization mechanism of human guidance. Based on that, we put forward an algorithm, PHIL-TD3, aiming at improving algorithmic abilities in the context of human-in-the-loop RL. We also introduce a human behavior modeling mechanism to relieve the human workload. PHIL-TD3 is employed to solve two challenging autonomous driving tasks, and its performance is comparatively evaluated against state-of-the-art human-guidance-based RLs as well as the non-guidance baseline. Three main points are obtained through experimental results:

1) The proposed PHIL-TD3 can improve the learning efficiency by over 700\% and 120\% under the adopted two situations, respectively, and achieve remarkably higher asymptotic performance compared to state-of-the-art human-guidance-based RLs. 

2) The proposed PHIL-TD3 achieves the most favorable performance, robustness, and adaptiveness in a series of metrics under the adopted two challenging autonomous driving tasks. 

3) The proposed \(TDQA\) mechanism prominently contributes to the advance of PHIL-TD3 and can well discriminate the quality of various human guidance to relieve humans by less requiring on human proficiency.

In future works, the proposed algorithm is expected to be transferred to a real-world ground vehicle, whereby the effect of human guidance on RL's optimization and control performance can be further examined in real life.

\bibliographystyle{IEEEtran}
\bibliography{reference}

\vfill

\appendix[]\label{appendix}
\newtheorem{theorem}{Theorem}
\newtheorem{remark}{Remark}
\newtheorem{proof}{Proof}

\begin{theorem}[Policy Optimality Invariance of the Human Intervention-based Reward Shaping] \label{therorem1}
Let the intervention-based reward shaping function \(F:S\times A\times S\rightarrow\mathbb{R}\) add a negative constant to the human intervened state as Eq. \ref{eq17}, if the human intervention will certainly occur at state \(\mathbf{s}_t\) when the next state \(\mathbf{s}_{t+1}\) is unacceptable, then the reward shaping function \(F\) does not change the policy optimality.
\end{theorem}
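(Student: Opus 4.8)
The plan is to reduce the claim to the potential-based reward-shaping theorem of Ng, Harada, and Russell (1999), which states that augmenting the reward with a term of the form $F(\mathbf{s},\mathbf{a},\mathbf{s}')=\gamma\Phi(\mathbf{s}')-\Phi(\mathbf{s})$ leaves the set of optimal policies unchanged. The reason this helps is that such a term shifts every action value by the action-independent quantity $-\Phi(\mathbf{s})$, i.e.\ $\tilde{Q}^{\star}(\mathbf{s},\mathbf{a})=Q^{\star}(\mathbf{s},\mathbf{a})-\Phi(\mathbf{s})$, so the maximizer $\operatorname{arg\,max}_{\mathbf{a}}Q^{\star}(\mathbf{s},\mathbf{a})$ at every state is preserved. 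Hence my goal becomes to exhibit a potential $\Phi$ for which the intervention penalty of Eq.\ \ref{eq17} coincides with $\gamma\Phi(\mathbf{s}')-\Phi(\mathbf{s})$ along every realized trajectory.

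First I would use the hypothesis to turn the penalty, which is defined through the trajectory-level mask indicator $(\mathbf{\Delta}_t=\mathbf{I})\wedge(\mathbf{\Delta}_{t-1}=\mathbf{0})$, into a property of the state alone. Because intervention is assumed to occur with certainty exactly when the would-be successor is unacceptable, the set $D$ of pre-catastrophe states at which the penalty fires is a deterministic function of $\mathbf{s}$, independent of the policy or of the exploration noise. I would then define $\Phi(\mathbf{s})=-r_{\text{pen}}$ on $D$ and $\Phi(\mathbf{s})=0$ elsewhere (recall $r_{\text{pen}}<0$), and check the transition types a trajectory can exhibit: a purely RL-controlled transition between non-danger states contributes $\gamma\cdot 0-0=0$, matching the unshaped reward, while the onset-of-intervention transition leaves a state in $D$ for the human-rescued successor outside $D$ and contributes $\gamma\cdot 0-(-r_{\text{pen}})=r_{\text{pen}}$, matching Eq.\ \ref{eq17}.

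Next I would discharge the boundary bookkeeping: unacceptable states are treated as terminal/absorbing so their potential and continuation value are pinned, and the penalty is charged once, at intervention onset, in agreement with the mask condition. Combining the per-transition identities with the shift property of the first paragraph gives $\tilde{Q}^{\star}(\mathbf{s},\mathbf{a})=Q^{\star}(\mathbf{s},\mathbf{a})-\Phi(\mathbf{s})$ on the genuinely RL-controlled states, and taking the per-state maximizer yields the same greedy policy with and without $F$, which is the assertion.

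The hard part will be reconciling the fact that Eq.\ \ref{eq17} is one-sided: it debits $r_{\text{pen}}$ when the agent leaves a danger state but deposits nothing when the agent first enters one, whereas a literal potential difference $\gamma\Phi(\mathbf{s}')-\Phi(\mathbf{s})$ would also contribute a compensating term on the entering transition. This is exactly where the ``certainly occurs'' hypothesis must do its work: I expect to argue that along any trajectory the entering and leaving contributions telescope, so only the endpoints survive and the omitted term is immaterial to the return, or equivalently that entering a state in $D$ is never part of an optimal policy because catastrophe is then unavoidable, so the greedy comparison at RL-controlled states is unaffected. Making this telescoping airtight, together with handling the one-step discount offset between charging $r_{\text{pen}}$ at the danger state and incurring $r_{\text{fail}}$ at the unacceptable successor (the paper sets $r_{\text{pen}}=r_{\text{fail}}$), is where the real effort lies; the remainder is the standard shift-invariance computation.
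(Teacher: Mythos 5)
Your overall route is the same as the paper's — reduce Eq.~\ref{eq17} to the Ng--Harada--Russell potential-based shaping theorem and exhibit an explicit potential — but your potential sits on the wrong states, and the mismatch this creates is exactly the ``hard part'' you flag at the end, which cannot be repaired by the means you propose. The paper defines $\Phi(\mathbf{s})=r_{\text{pen}}/\gamma$ on the \emph{unacceptable states themselves} and $0$ elsewhere. Under the theorem's hypothesis, the penalized (intervention-onset) transition is precisely a transition $(\mathbf{s}_t,\mathbf{a}_t,\mathbf{s}_{t+1})$ whose realized successor $\mathbf{s}_{t+1}$ is unacceptable, so there $F=\gamma\Phi(\mathbf{s}_{t+1})-\Phi(\mathbf{s}_t)=\gamma\cdot r_{\text{pen}}/\gamma-0=r_{\text{pen}}$ (the $1/\gamma$ scaling absorbs the one-step discount offset you worry about), every other transition gives $F=0-0=0$, and since unacceptable states are terminal (a collision ends the episode) no realized transition ever \emph{leaves} a state carrying nonzero potential. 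Hence the paper's $\Phi$ reproduces the shaping term of Eq.~\ref{eq17} exactly on every transition, and invoking the shaping theorem finishes the proof; no telescoping over visits is needed at all.

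Your construction, $\Phi(\mathbf{s})=-r_{\text{pen}}$ on the pre-catastrophe set $D$, does not reproduce Eq.~\ref{eq17}: a transition from a safe state into $D$ contributes $\gamma(-r_{\text{pen}})-0=-\gamma r_{\text{pen}}\neq 0$, while Eq.~\ref{eq17} charges nothing there, so your $\Phi$ does not certify Eq.~\ref{eq17} as potential-based shaping and the theorem cannot be invoked. Neither of your proposed repairs closes this gap. (i) Telescoping fails quantitatively: for a visit that enters $D$ at time $t$ (via the transition at $t-1$) and leaves at time $t$ under human control, your potential-based terms cancel, $-\gamma^{t}r_{\text{pen}}+\gamma^{t}r_{\text{pen}}=0$, whereas Eq.~\ref{eq17} contributes the net amount $\gamma^{t}r_{\text{pen}}<0$; thus the two shaped returns differ by $\gamma^{t}r_{\text{pen}}$ \emph{per visit to} $D$, a policy-dependent quantity that does not collapse to trajectory endpoints and is exactly the kind of term that shifts the argmax. (ii) The claim that ``entering $D$ is never optimal because catastrophe is unavoidable'' contradicts your own counterfactual reading of the hypothesis, in which the human rescues the agent from $D$ at no cost in the unshaped MDP; under that reading, passing through $D$ can be optimal, Eq.~\ref{eq17} taxes each such passage, and the conclusion of the theorem would simply be false. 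The statement is only provable under the paper's literal reading — the penalized transition terminates in the unacceptable (terminal) state — with the potential placed on those states, which is what the paper's proof does.
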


\begin{proof} 
According to \cite{RN30}, potential-based reward shaping function \(F:S\times A\times S\) is proven to be the only form that can preserve policy optimality. Specifically, \(F\) is represented as:
\begin{equation}\label{eqA1}
    F(\mathbf{s}_t,\mathbf{a}_t,\mathbf{s}_{t+1})=\gamma\Phi(\mathbf{s}_{t+1)} )-\Phi(\mathbf{s}_t),
\end{equation}
where \(\Phi:S\rightarrow\mathbb{R}\) is called the potential function defined over the state space.

Thus, the proof converts to construct potential function \(\Phi\). 

Define the potential function \(\Phi\) as:
\begin{equation}\label{eqA2}
        \Phi(\mathbf{s}_t)= 
\begin{cases}
    \frac{r_{\text{pen}}}{\gamma},& \text{if } \mathbf{s}_t \text{ is unacceptable}\\
    0,              & \text{otherwise}.
\end{cases}
\end{equation}

Then, when humans intervene in the state \(\mathbf{s}_t\) (meaning \(\mathbf{s}_{t+1}\) is unacceptable), \(F\) becomes:
\begin{equation}\label{eqA3}
\begin{split}
    F(\mathbf{s}_t,\mathbf{a}_t,\mathbf{s}_{t+1})&=\gamma\Phi(\mathbf{s}_{t+1} )-\Phi(\mathbf{s}_t )\\
    &=\frac{r_{\text{pen}}}{\gamma}\cdot\gamma-0=r_{\text{pen}}.
\end{split}
\end{equation}

And when humans do not intervene the state, \(F\) becomes:
\begin{equation}\label{eqA4}
    F(\mathbf{s}_t,\mathbf{a}_t,\mathbf{s}_{t+1})=\gamma\Phi(\mathbf{s}_{t+1} )-\Phi(\mathbf{s}_t )=0-0=0.
\end{equation}

Lumping Eqs. \ref{eqA3} and \ref{eqA4}, \(F\) turns into the reward-shaping term of Eq. \ref{eq17}, shown as:
\begin{equation}\label{eqA5}
\begin{split}
    r_t^{\text{shape}}=&r_t+F(\mathbf{s}_t,\mathbf{a}_t,\mathbf{s}_{t+1})\\
    =&r_t+r_{\text{pen}}\cdot[(\Delta_t=\mathbf{I} )\land(\Delta_{t-1}=\mathbf{0} )],
\end{split}
\end{equation}
where \([(\Delta_t=\mathbf{I} )\land(\Delta_{t-1}=\mathbf{0} )]\) refers to the intervention event of the human.

Hence, we complete the proof.

\end{proof} 

\begin{remark}\label{remark1}
Theorem \ref{therorem1} is established on the below assumptions: humans are considered to owe invariant judgment on the environment state. In this manner, the \(\Phi\) can be seen as a stable function defined in the state space.
\end{remark}

\begin{remark} 
The assumption of Remark \ref{remark1} is hard to be maintained in practice. This is because 1) the varying mental and physical status of one specific human participant would affect its accurate judgment on the environment state; 2) the judgment on the environment will be varying across different human participants; 3) the state space in the context of deep networks (the image-based one in our manuscript) is intractable to be identified by humans accurately.
\end{remark}

\setcounter{table}{0}
\renewcommand{\thetable}{A\arabic{table}}

\begin{table}[tb]
\caption{Configuration of the experimental platform.}
\begin{center}
\begin{adjustbox}{max width=\linewidth}
\begin{tabular}{ ccc } 
 \hline
 Type & Description & Details \\ 
 \hline
 Workstation & Operation system & \textit{Ubuntu 20.04}\\ 
 Workstation & CPU + RAM & \textit{AMD Ryzen 3900X + 32GB} \\ 
 Workstation & GPU & \textit{NVIDIA RTX 2080S} \\
 Driving simulator & Scenario software & \textit{CARLA}\\
 Driving simulator & Steering wheel suit & \textit{Logitech G29} \\
 Driving simulator & Displays & \textit{Joint heads-up monitors\(\times\)3} \\
 Driving simulator & Other equipment & \textit{Driver seat suit}\\
 Robotic vehicle & Vehicle brand & \textit{Wheeled UGV-Hunter}\\
 Robotic vehicle & Size dimension & \textit{1000mm \(\times\) 740mm \(\times\) 400mm}\\
 Robotic vehicle & Communication type & \textit{ROS publisher-subscriber}\\
 Robotic vehicle & Calculation board & \textit{Xavier NX Dev Kit}\\
 Other & Programming & \textit{Python}\\
 Other & Neural network toolbox & \textit{Pytorch}\\
 \hline
\label{tableA1}
\end{tabular}
\end{adjustbox}
\end{center}
\end{table}

\begin{table}[tb]
\caption{Architecture and details of value neural network (critic)}
\begin{center}
\begin{tabular}{ cc } 
 \hline
 Parameter & Value \\ 
 \hline
 Input (state + action) shape & [80,45,2] + [1]\\
 Network convolution Filter feature & [6,16] (kernel size 6 \(\times\) 6)\\
 Network pooling feature & Maxpooling (Stride 2)\\
 Network fully connected layer feature & [256,128,64]\\
 \hline
\label{tableA2}
\end{tabular}
\end{center}
\end{table}

\begin{table}[tb]
\caption{Architecture and details of policy neural network (actor)}
\begin{center}
\begin{tabular}{ cc } 
 \hline
 Parameter & Value \\ 
 \hline
 Input (state) shape & [80,45,2]\\
 Network convolution Filter feature & [6,16] (kernel size 6 \(\times\) 6)\\
 Network pooling feature & Maxpooling (Stride 2)\\
 Network fully connected layer feature & [256,128,64]\\
 \hline
\label{tableA3}
\end{tabular}
\end{center}
\end{table}

\begin{table}[tb]
\caption{Architecture and details of DAgger-based human policy model}
\begin{center}
\begin{adjustbox}{max width=\linewidth}
\begin{tabular}{ cc } 
 \hline
 Parameter & Value \\ 
 \hline
 Input (state) shape & [80,45,1] \\
 Network convolution Filter feature & [6,16] (kernel size 6 \(\times\) 6)\\
 Network pooling feature & Maxpooling (Stride 2)\\
 Network fully connected layer feature & [256,128,64]\\
 \hline
\label{tableA4}
\end{tabular}
\end{adjustbox}
\end{center}
\end{table}

\begin{table}[tb]
\caption{Hyperparameters for RL training}
\begin{center}
\begin{adjustbox}{max width=\linewidth}
\begin{tabular}{ ccc } 
 \hline
 Type & Description & Details \\ 
 \hline
 Maximum episode & Cutoff episode number of the training process & 400\\ 
 Minibatch size (\(N\)) & Capacity of minibatch & 128 \\ 
 Actor learning rate & Initial learning rate (policy/actor networks) & 5e-4 \\
 Critic learning rate & Initial learning rate (value/critic networks) & 2e-4\\
 Learning rate decay & Delay of learning rate (per episode) & 0.996 \\
 Activation function & Activation function of the networks & Relu \\
 Initial exploration & Initial exploration rate of noise in \(\epsilon\) greedy & 1\\
 Final exploration & Cutoff exploration rate of noise in \(\epsilon\) greedy & 0.05\\
 Gamma (\(\gamma\)) & Discount factor of the Bellman equation & 0.95\\
 Soft updating factor & Parameter update frequency to target networks & 1e-3\\
 Noise scale (\(\epsilon\)) & Noise amplitude of action in TD3 & 0.2\\
 Bounding box (\(c\)) & Bounding of the exploratory action in TD3 & 1\\
 Policy delay (\(d\)) & Update frequency of critic over actor & 1\\
 \hline
\label{tableA5}
\end{tabular}
\end{adjustbox}
\end{center}
\end{table}

\begin{table}[tb]
\caption{Hyperparameters for the PER mechanism}
\begin{center}
\begin{adjustbox}{max width=\linewidth}
\begin{tabular}{ ccc } 
 \hline
 Type & Description & Details \\ 
 \hline
 Replay buffer size & Capacity of PER buffer & 1e5\\
 Priority factor (\(\alpha\)) & Priority scaling factor & 0.6 \\ 
 Sample factor (\(\beta\)) & Importance sampling correlation & 1 \\
 Offset factor (\(\varepsilon\)) & Tiny constant avoiding zero retrieving probability & 1e-3\\

 \hline
\label{tableA6}
\end{tabular}
\end{adjustbox}
\end{center}
\end{table}

\begin{table}[tb]
\caption[Hyperparameters for the human policy model.]{Hyperparameters for DAgger-based human policy model.}
\begin{center}
\begin{adjustbox}{max width=\linewidth}
\begin{tabular}{ ccc } 
 \hline
 Type & Description & Details \\ 
 \hline
 Learning rate & Initial learning rate & 1e-4\\
 Activation function & Activation function of the network & Relu \\ 
 Maximum episode & Cutoff episode number of the training process & 50 \\
 Batch size & Capacity of minibatch & 128\\

 \hline
\label{tableA7}
\end{tabular}
\end{adjustbox}
\end{center}
\end{table}

\setcounter{figure}{0}
\renewcommand{\thetable}{A\arabic{figure}}

\begin{figure}[tb]
\begin{center}
\noindent
  \includegraphics[width=\linewidth]{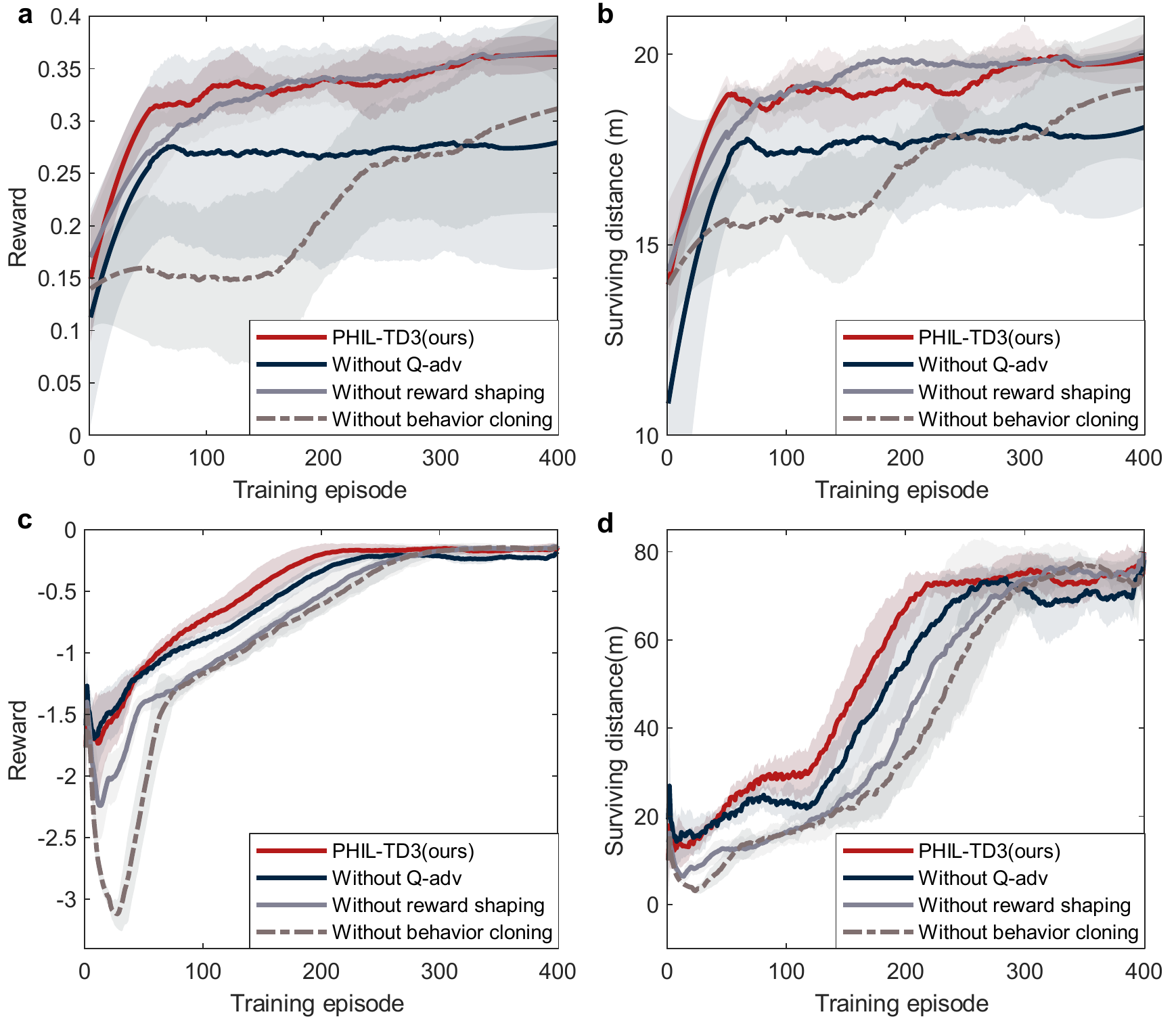}
  \end{center}
    \caption[Ablation study of the learning efforts of the proposed algorithm]{Ablation study of the proposed algorithm. a-b, curves of training rewards and surviving distances in the left-turn scenario, respectively. c-d, curves of training rewards and surviving distances in the congestion scenario, respectively. }\label{Figures1}
\end{figure}

\end{document}